\DeclareMathOperator*{\argmin}{arg\,min}
\DeclareMathOperator*{\trace}{Trace}
\newtheorem{theorem}{Theorem}
\newtheorem{definition}{Definition}[section]
\newtheorem{ass}{Assumptions}[section]
\newtheorem{appendixthm}{Theorem}[section]
\newcommand{\R}{\mathbb{R}}
\newcommand{\N}{\mathbb{N}}
\newcommand{\1}{\mathbf{1}}
\renewcommand{\P}{\mathbbm{P}}
\newcommand{\E}{\mathbbm{E}}
\newcommand{\F}{\mathcal{F}}
\newcommand{\T}{\mathcal{T}}
\newcommand{\eps}{\varepsilon}
\title{Numerically Solving Parametric Families of High-Dimensional Kolmogorov Partial Differential Equations via Deep Learning}
\author{
   Julius Berner\thanks{Equal contribution.} \\
   Faculty of Mathematics, University of Vienna \\
   Oskar-Morgenstern-Platz 1, 1090 Vienna, Austria\\
   \texttt{julius.berner@univie.ac.at}
   \And
   Markus Dablander$^\ast$ \\
   Mathematical Institute, University of Oxford \\
   Andrew Wiles Building, OX2 6GG, Oxford, United Kingdom \\
   \texttt{markus.dablander@maths.ox.ac.uk} \\
   \AND
   Philipp Grohs \\
Faculty of Mathematics and Research Platform DataScience@UniVienna,
University of Vienna \\ 
Oskar-Morgenstern-Platz 1, 1090 Vienna, Austria\\
RICAM, Austrian Academy of Sciences \\ Altenberger Straße 69, 4040 Linz, Austria \\
   \texttt{philipp.grohs@univie.ac.at} \\
}
\begin{document}
\maketitle

\begin{abstract}
We present a deep learning algorithm for the numerical solution of parametric families of high-dimensional linear Kolmogorov partial differential equations (PDEs). Our method is based on reformulating the numerical approximation of a whole family of Kolmogorov PDEs as a single statistical learning problem using the Feynman-Kac formula. Successful numerical experiments are presented, which empirically confirm the functionality and efficiency of our proposed algorithm in the case of heat equations and Black-Scholes option pricing models parametrized by affine-linear coefficient functions. We show that a single deep neural network trained on simulated data is capable of learning the solution functions of an entire family of PDEs on a full space-time region. Most notably, our numerical observations and theoretical results also demonstrate that the proposed method does not suffer from the curse of dimensionality, distinguishing it from almost all standard numerical methods for PDEs.
\end{abstract}
\section{Introduction}
Linear parabolic partial differential equations (PDEs) of the form
\begin{equation} \label{kol_pde_intro}
\tfrac{\partial u_\gamma }{\partial t}  = \tfrac{1}{2} \trace\big(\sigma_\gamma  [\sigma_\gamma  ]^{*}\nabla_x^2 u_\gamma \big) + \langle \mu_\gamma , \nabla_x u_\gamma \rangle, \quad
u_\gamma (x,0) = \varphi_\gamma(x),
\end{equation}
are referred to as Kolmogorov PDEs, see~\cite{hairer2015loss} for a thorough study of their mathematical properties. Throughout this paper, the functions
\begin{equation*}
\varphi_\gamma   : \mathbb{R}^d \rightarrow \mathbb{R} \quad \text{(initial condition)} \quad \text{and} \quad
\sigma_\gamma  : \mathbb{R}^d \rightarrow \mathbb{R}^{d \times d}, \quad 
\mu_\gamma  : \mathbb{R}^d \rightarrow \mathbb{R}^{d} \quad \text{(coefficient maps)}
\end{equation*}
are continuous, and are implicitly determined by a real parameter vector $\gamma \in D  $, whereby $D$ is a compact set in Euclidean space.

Kolmogorov PDEs frequently appear in applications in a broad variety of scientific disciplines, including physics and financial engineering \cite{black1973pricing, pascucci2005kolmogorov, widder1976heat}. In particular, note that the heat equation from physical modelling as well as the widely-known Black-Scholes equation from computational finance are important special cases of Equation~\eqref{kol_pde_intro}.
Typically, one is interested in finding the (viscosity) solution\footnote{Viscosity solutions are the appropriate solution concept for a wide range of PDEs~\cite{crandall1992user, hairer2015loss}. Viscosity solutions are continuous, but not necessarily differentiable.}
\begin{equation*}
    u_\gamma  : [v,w]^d \times [0,T] \rightarrow \mathbb{R}
\end{equation*}
of Equation~\eqref{kol_pde_intro} on a predefined space-time region of the form $[v,w]^{d} \times [0,T]$. In almost all cases, however, Kolmogorov PDEs cannot be solved explicitly. Furthermore, standard numerical solution algorithms for PDEs, in particular those based on a discretization of the considered domain, are known to suffer from the so-called \textit{curse of dimensionality}\footnote{The classical way to circumvent the curse of dimensionality has been the employment of stochastic Monte Carlo based methods, see e.g.~\cite{graham2013stochastic}; these techniques, however, are only suitable to approximately compute the solution $u_\gamma(x,t)$ at a single \emph{fixed} space-time point $(x,t) \in [v,w]^d \times [0,T]$, limiting their usefulness in practice.}, meaning that their computational cost grows exponentially in the dimension of the domain~\cite{ames2014numerical, seydel2006tools}. The development of new, computationally efficient methods for the numerical solution of Kolmogorov PDEs is therefore of high interest for applied scientists. 

In this work, we present a novel deep learning algorithm capable of numerically approximating the solutions $(u_\gamma )_{\gamma \in D}$ of a whole family of $\gamma$-parametrized Kolmogorov PDEs on a full space-time region. Specifically, our proposed method allows to train a single deep neural network 
\begin{equation}
\Phi\colon D   \times [v,w]^d \times [0,T] \rightarrow \mathbb{R}
\end{equation}
to approximate the \textit{parametric solution map}
\begin{align}
\label{eq:gen_sol_map}
    \bar{u} : D   \times [v,w]^d \times [0,T] \rightarrow \mathbb{R}, \quad 
    (\gamma, x, t) \mapsto \bar{u}(\gamma, x, t) := u_\gamma (x,t),
\end{align}
of a family of $\gamma$-parametrized Kolmogorov PDEs on the generalized domain $D   \times [v,w]^d \times [0,T]$. Most notably, we also theoretically investigate the associated approximation and generalization errors and rigorously show that our algorithm does not suffer from the curse of dimensionality with respect to the neural network size as well as the sample size. We emphasize that our empirical results strongly suggest that also the empirical risk minimization (ERM) algorithm, usually a variant of stochastic gradient descent, does not suffer from the curse of dimensionality but proving this is out of scope of this paper.

\subsection{PDEs and Deep Learning: Current Research and Related Work}
 \label{pde_deeplearning_relatedwork}
Interest in deep-learning based techniques for the numerical solution of PDEs has been growing rapidly in recent years~\cite{beck2019machine, han2018solving, jentzen2018proof, raissi2019physics,sirignano2018dgm, weinan2018deep, weinan2017deep}. This sharp rise in interest can partly be explained by the remarkable ability of deep neural networks to avoid incurring the curse of dimensionality when used to approximate PDE solutions in particular settings. More specifically, in some situations it has been possible to find theoretical upper bounds for the size of the required neural network architectures which do not depend exponentially on the dimension of the PDE~\cite{elbrachter2018dnn, grohs2018proof, hutzenthaler2019proof, schwab2019deep, reisinger2019rectified}. This represents a rare and crucial advantage over classical finite difference and finite element methods, all of which typically cannot be used in high dimensions due to the resulting exponential explosion of required computational costs. 

As a result of these successes, deep learning has recently been studied as a numerical solution technique for the more general group of parametric PDEs and their associated parametric solution maps~\cite{eigel2018variational, hesthaven2018non, khoo2017solving, kutyniok2019theoretical, laakmann2020efficient, schwab2019deep}. The investigation of the capabilities of deep artificial neural networks to learn parametric solution maps of specific parametrizable families of PDEs has become a new and active area of research. In this work, we provide novel theoretical and empirical results which, for the first time, demonstrate the viability of deep learning algorithms for the scalable solution of large classes of parametric Kolmogorov PDEs.

The formulation of the learning problem underlying our method is inspired by the work of Beck et al.~\cite{beck2018solving}. There it is shown how deep neural networks can be used to numerically solve a non-parametric version of Equation~\eqref{kol_pde_intro} with fixed initial condition $\varphi_\gamma$ and fixed coefficients maps $\sigma_\gamma, \mu_\gamma$ on a predefined space region $[v,w]^d$
and at a predefined time slice $T > 0$. In other words, their non-parametric method allows to approximate the function
\begin{align*} \label{solution_special_case}
    u_\gamma (\ {\cdot} \ , T) : [v,w]^d \rightarrow \mathbb{R}, \quad 
     x \mapsto u_\gamma (x,T),
\end{align*}
for fixed $\gamma\in D$ by training a deep neural network with suitable simulated data of the form
\begin{equation*}
    (X,\varphi_\gamma  (S_{\gamma,X,T})) \in [v,w]^d \times \mathbb{R}.
\end{equation*}
Here, $X$ is uniformly drawn from the spatial hypercube $[v,w]^d$ and the random vector $S_{\gamma,X,T}$ is the value of the solution process $(S_{\gamma,X,t})_{t\ge 0}$ of the stochastic differential equation (SDE)
\begin{align*} 
dS_{\gamma,X,t} = \mu_\gamma   (S_{\gamma,X,t}) dt + \sigma_\gamma  (S_{\gamma,X,t}) dB_t, \quad S_{\gamma,X,0} = X,
\end{align*}
at time $t=T$, whereby $(B_t)_{t \ge 0}$ is a standard $d$-dimensional Brownian motion.

The choice of training data is based on the following important identity, which under suitable regularity assumptions holds for all $x \in [v,w]^d$, $t \in [0,T]$, and $\gamma\in D$:
\begin{equation} \label{feynman-Kac}
    u_\gamma (x, t) = \mathbb{E}[\varphi_\gamma  (S_{\gamma,x,t} )].
\end{equation}
Equality~\eqref{feynman-Kac} is a version of the well-known \textit{Feynman-Kac formula} which establishes a link between the theory of parabolic PDEs and the theory of stochastic differential equations~\cite{hairer2015loss}. Using the Feynman-Kac formula, one can show within the mathematical framework of empirical risk minimization~\cite{cucker2002mathematical,vapnik1998statistical} that $u_\gamma ( \cdot \, , T)$ is in fact the solution of the supervised statistical learning problem defined by the predictor variable $X$, the target variable $\varphi_\gamma  (S_{\gamma,X,T} )$, and a standard quadratic loss function~\cite{beck2018solving}.

\subsection{Novel Contribution}
In this work, we introduce the first algorithm for the numerical solution of \textit{parametric} Kolmogorov PDEs on a \textit{whole} space-time region.
No previous technique has achieved this degree of generality; all former methods for parametric Kolmogorov PDEs were either only capable of computing local solutions at single space-time points of the domain using Monte Carlo based approaches or did not employ deep neural networks and were thus not able to break the curse of dimensionality. 
Our technique is made possible by constructing a suitable supervised learning problem via a nontrivial application of the Feynman-Kac formula~\eqref{feynman-Kac}, which involves random initial conditions and SDEs with random coefficients and stopping times. This reformulation of a broad class of parametric PDEs as learning problems provides a new theoretical framework to analyze the convergence behavior of deep learning algorithms. Building upon this framework, we prove theoretical guarantees for the computational performance of our technique and, to the best of our knowledge, establish the first combined approximation and generalization results for parametric PDEs. 

Note that the parametric nature of the presented algorithm opens up the novel possibility to study changes in the potentially high-dimensional solution manifold of Equation \eqref{kol_pde_intro} as its initial conditions and coefficient maps vary with $\gamma \in D$. The study of such changes is commonly referred to as \emph{sensitivity analysis}. Recall that the proposed method delivers a neural network $\Phi$ which approximates the parametric PDE solution map, i.e.\@ $\Phi \approx \bar{u}$. The partial derivatives of $\Phi$ with respect to the parameter $\gamma$, the spatial variable $x$, and the time variable $t$ can then be readily computed via automatic differentiation. Thus, the partial derivatives of $\Phi$ become computationally accessible approximations of the partial derivatives of $\bar{u}$. The partial derivatives of $\bar{u}$ in turn play an important role in a variety of widespread applications, such as in the computation of the \enquote{Greeks} associated with the Black-Scholes model from financial engineering (see Section \ref{section_black_scholes}).

Another highly relevant application area opened up by our method is the calibration of the usually unknown PDE coefficients $\sigma_\gamma, \mu_\gamma$ using real-world data. After solving a parametric PDE with our technique, one can fit $\gamma$ such that the PDE solution manifold best describes a real data set and additionally apply uncertainty quantification techniques if $\gamma$ is modelled as a random variable.

Finally, we establish a new architecture and compare different learning schemes to provide future researchers with a robust framework for parametric PDEs, which are inherently less stable than their simpler non-parametric counterparts. Further, this work is complemented by an extendable implementation with the possibility of distributed training and hyperparameter optimization for the special use-cases of other researchers.

\section{Algorithm} \label{section_algorithm}
The key idea of the presented algorithm is to describe the parametric solution map~\eqref{eq:gen_sol_map} of the $\gamma$-parametrized Kolmogorov PDE~\eqref{kol_pde_intro} as the regression function of an appropriately chosen supervised statistical learning problem. One can then use simulated training data in order to learn $\bar{u}$ by means of deep learning. Inspired by the Feynman-Kac formula \eqref{feynman-Kac}, we construct a new statistical learning problem via a uniformly distributed predictor variable
and a statistically dependent target variable:
\begin{equation*} 
\Lambda := (\Gamma, X, \mathcal{T}) \in D   \times [v,w]^d \times [0,T] \quad \text{(predictor)} \quad \text{and} \quad Y:=\varphi_\Gamma  (S_{\Lambda}) \in  \mathbb{R} \quad \text{(target)}.
\end{equation*}
The random vector $S_{\Lambda}$ is defined as the value of the solution process $(S_{\Gamma, X,t})_{t \geq 0}$ of the $\Gamma$-parametrized stochastic differential equation \begin{align}
\label{equation_sde_general}
dS_{\Gamma,X,t} = \mu_\Gamma   (S_{\Gamma,X,t}) dt +  \sigma_\Gamma  (S_{\Gamma,X,t}) dB_t, \quad  S_{\Gamma,X,0} = X,
\end{align}
at the (random) stopping time $t = \mathcal{T}$.
For suitable regularity assumptions, the Feynman-Kac formula~\eqref{feynman-Kac} then ensures that
\begin{equation*} 
    \mathbbm{E}[ \,Y \ \vert \ \Lambda = (\gamma,x,t)] =\mathbbm{E}[\varphi_\Gamma  (S_{\Lambda} ) \ \vert \ \Lambda = (\gamma,x,t)] = \mathbb{E}[\varphi_\gamma  (S_{\gamma,x,t})] = u_\gamma (x, t) = \bar{u}(\gamma, x, t).
\end{equation*}
This shows that the minimizer of the corresponding statistical learning problem with quadratic loss function is indeed the parametric Kolmogorov PDE solution map, see Theorem~\ref{thm:app_learn_prob} in the appendix for the precise assumptions and a detailed proof.
\begin{theorem}[Learning Problem] \label{thm:learn_prob}
It holds that the parametric solution map $\bar{u}$ is the unique minimizer of the statistical learning problem
\begin{equation} \label{eq:sol_learn}
  \operatorname{min}_{f} \mathbbm{E}\big[ \big(f(\Lambda) -Y\big)^2  \big].
\end{equation}
\end{theorem}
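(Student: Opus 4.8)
The statement is the classical identification of the $L^2$-regression function as the unique minimizer of the mean-squared error, so my plan is to reduce \eqref{eq:sol_learn} to this standard fact and then supply the problem-specific input, namely that $\bar u(\Lambda)$ is a version of the conditional expectation $\E[Y\mid\Lambda]$. The excerpt already records the pointwise identity $\E[Y\mid\Lambda=(\gamma,x,t)]=\bar u(\gamma,x,t)$, which rests on the Feynman-Kac formula \eqref{feynman-Kac}; I would invoke this (deferring its rigorous justification and the precise regularity hypotheses to Theorem~\ref{thm:app_learn_prob}) to obtain $\E[Y\mid\Lambda]=\bar u(\Lambda)$ almost surely. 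Before anything else I would fix the admissible class: the minimization ranges over (Borel-)measurable $f$ with $f(\Lambda)\in L^2$, and I would record that $Y\in L^2$ so that every quantity below is finite.

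The core computation is the bias-variance, or orthogonality, decomposition. For any admissible $f$ I would write
\begin{equation*}
\E\big[(f(\Lambda)-Y)^2\big]=\E\big[(f(\Lambda)-\bar u(\Lambda))^2\big]+2\,\E\big[(f(\Lambda)-\bar u(\Lambda))(\bar u(\Lambda)-Y)\big]+\E\big[(\bar u(\Lambda)-Y)^2\big].
\end{equation*}
The middle term is the only part requiring an argument: conditioning on $\Lambda$ and pulling the $\Lambda$-measurable factor $f(\Lambda)-\bar u(\Lambda)$ out of the inner expectation leaves the factor $\E[\bar u(\Lambda)-Y\mid\Lambda]=\bar u(\Lambda)-\E[Y\mid\Lambda]=0$ by the conditional-expectation identity, so the cross term vanishes by the tower property. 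Hence
\begin{equation*}
\E\big[(f(\Lambda)-Y)^2\big]=\E\big[(f(\Lambda)-\bar u(\Lambda))^2\big]+\E\big[(\bar u(\Lambda)-Y)^2\big],
\end{equation*}
where the second summand is independent of $f$. Since the first summand is nonnegative and equals zero precisely when $f(\Lambda)=\bar u(\Lambda)$ almost surely, the minimum is attained exactly by $f=\bar u$, which proves both that $\bar u$ is a minimizer and that it is the unique one up to $\P_\Lambda$-null sets.

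The only genuine subtleties, rather than obstacles, are bookkeeping ones. First, I would ensure the conditional-expectation identity holds as an almost-sure statement in $(\gamma,x,t)$ and not merely pointwise; this is exactly what the appendix theorem should deliver, and it is the step where all SDE-regularity and integrability assumptions are actually consumed. Second, the decomposition yields uniqueness only $\P_\Lambda$-almost everywhere, so to claim $\bar u$ as the literal unique minimizer I would upgrade this using that $\Lambda=(\Gamma,X,\T)$ is uniformly distributed on the compact product domain $D\times[v,w]^d\times[0,T]$ and hence has full support, together with continuity of $\bar u$: two continuous minimizers agreeing $\P_\Lambda$-a.e.\@ on a set of full support must coincide everywhere. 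I expect no conceptual difficulty here; the real work lies in verifying the hypotheses under which $\E[Y\mid\Lambda]=\bar u(\Lambda)$, which is the content of the deferred Feynman-Kac argument.
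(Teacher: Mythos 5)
Your overall architecture matches the paper's: identify the minimizer of \eqref{eq:sol_learn} with the regression function $(\gamma,x,t)\mapsto\E[Y\mid\Lambda=(\gamma,x,t)]$, then identify the regression function with $\bar{u}$ via the Feynman--Kac formula. The part you work out in detail --- the orthogonality/bias--variance decomposition showing that the regression function is the ($\P_\Lambda$-a.e.\@ unique) minimizer --- is correct, standard, and is precisely the part the paper dispatches with a citation to the statistical learning literature. Your remark on upgrading a.e.\@ uniqueness to literal uniqueness via continuity and full support is harmless but unnecessary: the paper only claims uniqueness up to null sets.

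The genuine gap is the step you defer, and the deferral is circular: you send the identity $\E[Y\mid\Lambda=(\gamma,x,t)]=\bar{u}(\gamma,x,t)$ to Theorem~\ref{thm:app_learn_prob}, but that theorem \emph{is} the detailed version of the statement you are proving, and establishing this identity is the entire problem-specific content of its proof. The subtlety is not the Feynman--Kac formula itself (which gives $\E[\varphi_\gamma(S_{\gamma,x,t})]=u_\gamma(x,t)$ for \emph{fixed, non-random} $(\gamma,x,t)$ and is legitimately cited from the literature); it is the passage from the randomized object $Y=\varphi_\Gamma(S_{\Gamma,X,\T})$ --- the solution of an SDE with random coefficients and random initial condition, evaluated at a random time --- to the frozen-parameter expectation $\E[\varphi_\gamma(S_{\gamma,x,t})]$ after conditioning on $\Lambda=(\gamma,x,t)$. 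This does not follow from the tower property alone. The paper handles it by assuming a jointly measurable representation $\Upsilon$ with $\Upsilon(B,\Gamma,X,\T)=\varphi_\Gamma(S_\Lambda)$ a.s.\@ and $\Upsilon(B,\gamma,x,t)=\varphi_\gamma(S_{\gamma,x,t})$ a.s.\@ for each fixed $(\gamma,x,t)$ (Assumptions~\ref{ass:regularity}), and then computing, for every Borel set $A$,
\begin{equation*}
\E\big[\1_{\{\Lambda\in A\}}\,\varphi_\Gamma(S_\Lambda)\big]
=\int_A\int_{\mathcal{C}([0,T],\R^d)}\Upsilon(b,\gamma,x,t)\,d\P_B(b)\,d\P_{(\Gamma,X,\T)}(\gamma,x,t)
=\int_A\E\big[\varphi_\gamma(S_{\gamma,x,t})\big]\,d\P_{(\Gamma,X,\T)}(\gamma,x,t),
\end{equation*}
using the independence of $B$ and $\Lambda$ and Fubini; this is what characterizes the conditional expectation. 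Your proof needs this disintegration argument (or an equivalent one) spelled out, together with the moment bounds guaranteeing $Y\in L^2$, which you assert but which the paper derives from standard estimates on SDE solution processes under Assumptions~\ref{ass:coeff}.
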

Restricting to a hypothesis space of suitable neural networks $\mathcal{H}$ and minimizing the empirical mean squared error (MSE) loss corresponding to~\eqref{eq:sol_learn}, we arrive at the feasible supervised ERM problem
\begin{equation}
\label{eq:emp_learn}
\textstyle
    \operatorname{min}_{\Phi\in\mathcal{H}} \tfrac{1}{s}\sum_{i=1}^s(\Phi(\lambda_i) - y_i)^2
\end{equation}
where $((\lambda_i, y_i))_{i=1}^s$ are realizations of i.i.d.\@ samples drawn from the distribution of $(\Lambda,Y)$. Typically, this problem is then solved by a variant of stochastic gradient descent~\cite{Ruder2016}.
The algorithm is graphically illustrated in Figure~\ref{fig:alg}.
 \begin{figure}[!tb]
    \centering
    \begin{minipage}{.48\textwidth}
        \centering
        \includegraphics[width=0.71\linewidth]{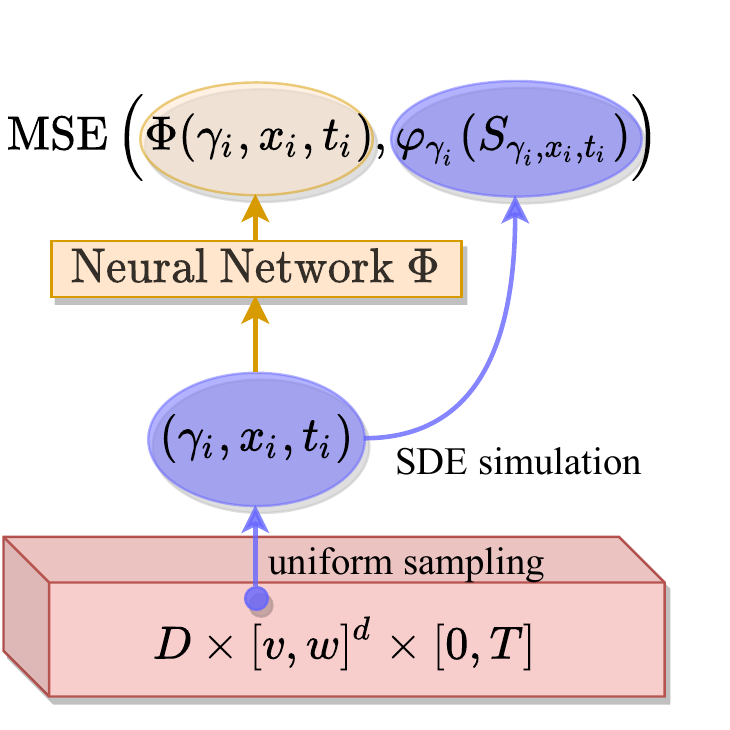}
        \caption{Illustration of the proposed supervised learning problem with predictor variable $\Lambda$ and target variable $\varphi_\Gamma  (S_{\Gamma,X,\mathcal{T}} )$.}
        \label{fig:alg}
    \end{minipage}
    \hspace{2em}
    \begin{minipage}{0.42\textwidth}
        \centering
        \includegraphics[width=\linewidth]{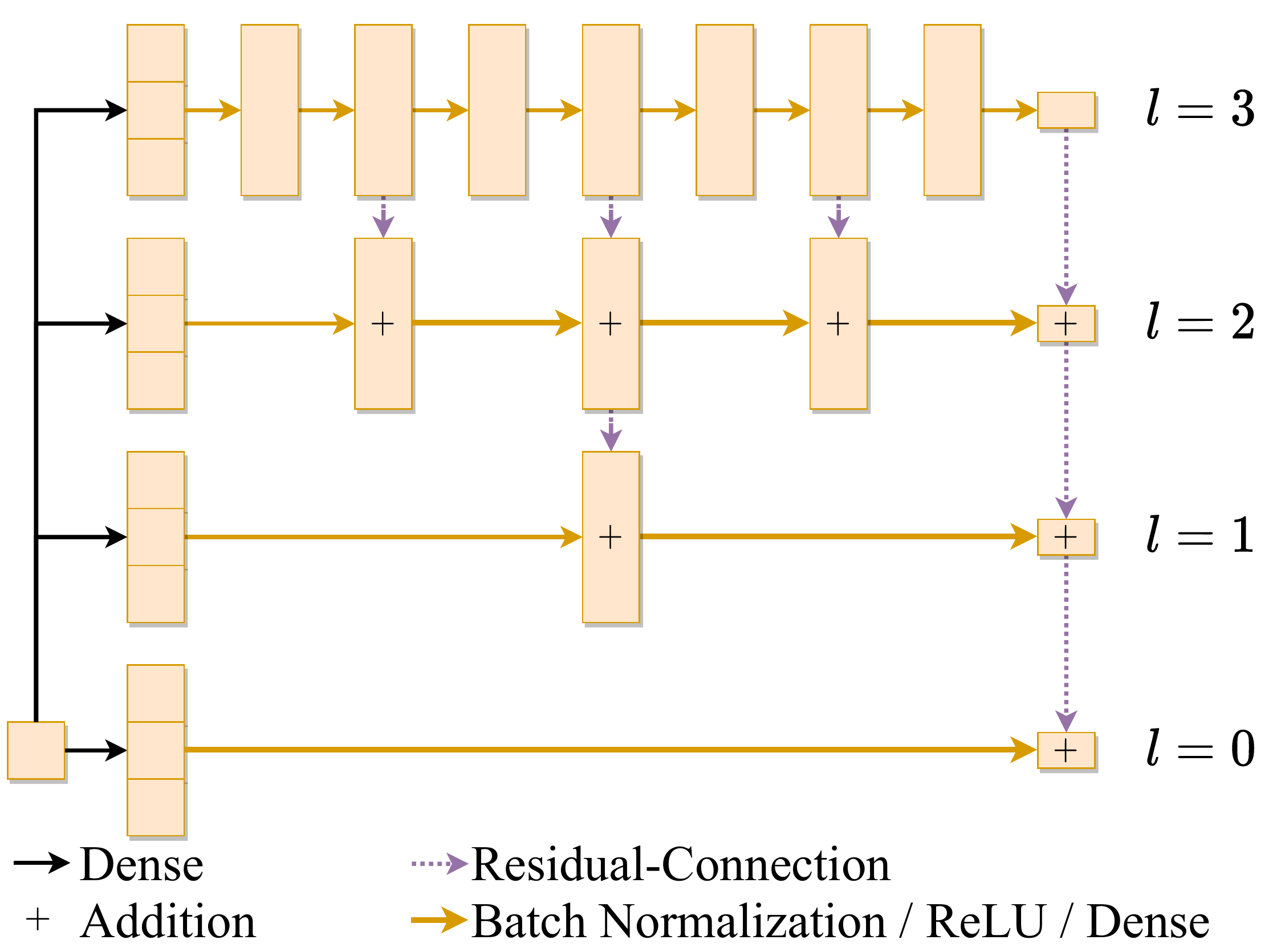}
        \caption{Illustration of the Multilevel architecture for $L=4$, $q=3$.}
        \label{fig:net}
    \end{minipage}
\end{figure}

It is trivial to simulate i.i.d.\@ samples of the predictor variable $\Lambda$, due to its uniform distribution. On the other hand i.i.d.\@ samples of the target variable $Y= \varphi_\Gamma  (S_{\Lambda} )$ can be obtained via standard numerical SDE solution techniques without curse of dimensionality~\cite{KloedenPlaten1992}. An example for such a technique is given by the \emph{Euler-Maruyama approximation} with $M\in\N$ equidistant steps $(S^{M,m}_{\Lambda})_{m=0}^M$ which is defined by the following scheme:
\begin{equation}
\label{eq:EM}
    S^{M,0}_{\Lambda} = X \quad \text{and} \quad
     S^{M,m+1}_{\Lambda} = S^{M,m}_{\Lambda} + \mu_\Gamma   (S^{M, m}_{\Lambda}) \tfrac{\mathcal{T}}{M} + \sigma_\Gamma  (S^{M, m}_{\Lambda})\big(
        B_{\frac{(m+1)\mathcal{T}}{M}} - B_{\frac{m\mathcal{T}}{M}}\big).
\end{equation}

The following theorem shows that solving the learning problem with data simulated by the Euler-Maruyama scheme does indeed result in the expected approximation of the parametric PDE solution map $\bar{u}$, see Theorem~\ref{thm:app_learn_approx} in the appendix for a proof.
\begin{theorem}[Approximated Learning Problem]
The unique minimizer $\bar{u}^{M}$ of the approximated statistical learning problem
\begin{equation*}
     \operatorname{min}_{f} \mathbbm{E}\big[ \big(f(\Lambda) -Y^M\big)^2  \big]
\end{equation*}
where $Y^M := \varphi_\Gamma  (S^{M,M}_{\Lambda} )$ is simulated using the Euler-Maruyama scheme~\eqref{eq:EM} with $M \sim 1/\eps^2$ equidistant steps satisfies that
\begin{equation*}
    \| \bar{u}^{M} - \bar{u}\|_{\mathcal{L}^\infty(D\times [v,w]^d \times [0,T])} \le \eps.
\end{equation*}
\end{theorem}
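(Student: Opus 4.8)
The plan is to reduce the claimed $L^\infty$ bound to the classical strong convergence rate of the Euler--Maruyama scheme, applied uniformly over the compact domain $D \times [v,w]^d \times [0,T]$. First I would observe that the argument behind Theorem~\ref{thm:learn_prob} identifies the minimizer of a quadratic-loss learning problem with the corresponding regression function, i.e.\ the conditional expectation of the target given the predictor. Since this characterization only requires the target to be square-integrable, applying it to $Y^M = \varphi_\Gamma(S^{M,M}_\Lambda)$ in place of $Y$ yields
\begin{equation*}
\bar{u}^M(\gamma,x,t) = \E\big[Y^M \,\big|\, \Lambda = (\gamma,x,t)\big] = \E\big[\varphi_\gamma\big(S^{M,M}_{\gamma,x,t}\big)\big],
\end{equation*}
where $S^{M,M}_{\gamma,x,t}$ denotes the Euler--Maruyama approximation~\eqref{eq:EM} started at $x$ with parameter $\gamma$ and run to time $t$. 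Combined with the Feynman--Kac identity $\bar{u}(\gamma,x,t) = \E[\varphi_\gamma(S_{\gamma,x,t})]$ underlying Theorem~\ref{thm:learn_prob}, the task reduces to the pointwise estimate
\begin{equation*}
\big|\bar{u}^M(\gamma,x,t) - \bar{u}(\gamma,x,t)\big| = \Big|\E\big[\varphi_\gamma\big(S^{M,M}_{\gamma,x,t}\big) - \varphi_\gamma\big(S_{\gamma,x,t}\big)\big]\Big|,
\end{equation*}
which I would then bound uniformly in $(\gamma,x,t)$.

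Next I would exploit the regularity of the initial conditions. Under the (uniform-in-$\gamma$) Lipschitz assumption on $\varphi_\gamma$ collected in the appendix, monotonicity of the integral followed by Cauchy--Schwarz gives
\begin{equation*}
\big|\bar{u}^M(\gamma,x,t) - \bar{u}(\gamma,x,t)\big| \le L_\varphi\, \E\big[\big|S^{M,M}_{\gamma,x,t} - S_{\gamma,x,t}\big|\big] \le L_\varphi\,\Big(\E\big[\big|S^{M,M}_{\gamma,x,t} - S_{\gamma,x,t}\big|^2\big]\Big)^{1/2},
\end{equation*}
so the problem is now purely about the strong $L^2$ discretization error of the scheme~\eqref{eq:EM}. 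At this point I would invoke the classical strong convergence theorem for the Euler--Maruyama method (see~\cite{KloedenPlaten1992}): provided $\mu_\gamma$ and $\sigma_\gamma$ satisfy global Lipschitz and linear-growth conditions, there is a constant $C$ with
\begin{equation*}
\Big(\E\big[\big|S^{M,M}_{\gamma,x,t} - S_{\gamma,x,t}\big|^2\big]\Big)^{1/2} \le C\,M^{-1/2}.
\end{equation*}
Choosing $M = \lceil (C L_\varphi/\eps)^2 \rceil \sim 1/\eps^2$ then makes the right-hand side at most $\eps$, and since the resulting bound is independent of $(\gamma,x,t)$, taking the supremum over $D \times [v,w]^d \times [0,T]$ yields the assertion.

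The main obstacle is ensuring that the constants $L_\varphi$ and $C$ can be chosen uniformly over the parameter set $D$. The textbook Euler--Maruyama constant depends on the Lipschitz and growth constants of the coefficients, on $T$, and on bounds for the initial data, and here all of these vary with $\gamma$. I would control this by using that $D$ is compact and that $\gamma \mapsto (\varphi_\gamma, \mu_\gamma, \sigma_\gamma)$ is continuous (indeed, in the applications of interest the coefficients are affine-linear in $\gamma$), so that the relevant Lipschitz and growth constants are bounded uniformly on $D$; the starting point $x$ ranges over the compact cube $[v,w]^d$ and the horizon is capped by $T$, keeping the initial-data contribution to $C$ bounded as well. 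The precise regularity hypotheses guaranteeing this uniformity are exactly those under which Theorem~\ref{thm:learn_prob} and the Feynman--Kac formula~\eqref{feynman-Kac} hold, and I would simply carry them through the strong-convergence estimate rather than re-derive the rate from scratch.
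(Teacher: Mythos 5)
Your overall route is the same as the paper's: identify $\bar{u}^M$ with the regression function $(\gamma,x,t)\mapsto\E[\varphi_\gamma(S^{M,M}_{\gamma,x,t})]$, compare it pointwise with the Feynman--Kac representation $\bar u(\gamma,x,t)=\E[\varphi_\gamma(S_{\gamma,x,t})]$, and conclude with a strong $L^2$ Euler--Maruyama rate of order $M^{-1/2}$ that is uniform over the compact set $D\times[v,w]^d\times[0,T]$. Your discussion of the uniformity of the constants over $D$ also matches the level at which the paper treats this point (it simply asserts the parametric extension of the Kloeden--Platen estimates).

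There is, however, one genuine gap: you bound $|\E[\varphi_\gamma(S^{M,M}_{\gamma,x,t})-\varphi_\gamma(S_{\gamma,x,t})]|$ by $L_\varphi\,\E[|S^{M,M}_{\gamma,x,t}-S_{\gamma,x,t}|]$ using a \emph{global}, $\gamma$-uniform Lipschitz constant $L_\varphi$ for the initial conditions. The paper's standing hypothesis (Assumptions~\ref{ass:coeff}, Item~\ref{it:ass_1}) is only a local Lipschitz condition with polynomially growing constant, $|\varphi_\gamma(x)-\varphi_\gamma(y)|\le c\|x-y\|(1+\|x\|^c+\|y\|^c)$, and this weaker hypothesis is genuinely needed: the paraboloid initial condition $\varphi(x)=\|x\|^2$ used in Section~\ref{sec:heat} is not globally Lipschitz, so your first inequality fails for one of the paper's own examples. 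The repair is exactly the extra step the paper takes: bound the difference by $c\,\E\big[\|S^{M,M}_{\gamma,x,t}-S_{\gamma,x,t}\|\,(1+\|S^{M,M}_{\gamma,x,t}\|^c+\|S_{\gamma,x,t}\|^c)\big]$, apply Cauchy--Schwarz to separate the increment from the polynomial weight, and control the weight by uniform $p$-th moment bounds $\E[\|S^{M,M}_{\gamma,x,t}\|^p]\le C$ and $\E[\|S_{\gamma,x,t}\|^p]\le C$, which hold uniformly on $D\times[v,w]^d\times[0,T]$ by the linear-growth assumptions on the coefficients. With that modification your argument coincides with the paper's proof.
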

In other words, the approximation of the SDE solution $S^{M,M}_{\Lambda}\approx S_{\Lambda}$ carries over to the approximation of the corresponding minimizer $\bar{u}^{M}\approx \bar{u}$.
Therefore there exists no constraint of having to solve the SDE in~\eqref{equation_sde_general} analytically. 
The ability to easily simulate artificial training data opens up the highly desirable capability to supply the learning algorithm with a potentially infinite stream of i.i.d.\@ data samples. Instead of having to use a train/val/test split on a given finite data set, one can thus constantly simulate new data points on demand during training. Since the number of samples then grows at will parallel to the training process, the first epoch never finishes and every new gradient computation can be done on the basis of previously unseen data. 

\subsection{Example: Affine-Linear Coefficient Functions}
\label{section_affine_linear}

In Section~\ref{num_results} we present numerical experiments based on our algorithm for the important special case where $\sigma_\gamma  $ and $\mu_\gamma  $ are affine-linear functions. Thus from now on let us assume that\footnote{We denote by $[a_1\vert \dots \vert a_d] \in \R^{d\times d}$ the horizontal concatenation of the vectors $a_1,\dots, a_d \in\R^d$.}
\begin{align*}
&\sigma_\gamma  : \mathbb{R}^d \rightarrow \mathbb{R}^{d \times d}, \quad \sigma_\gamma  (x) = [\gamma_{\sigma,1}x \vert \dots \vert \gamma_{\sigma,d}x] +\gamma_{\sigma, d+1}, \\
&\mu_\gamma  : \mathbb{R}^d \rightarrow \mathbb{R}^{d}, \quad \mu_\gamma  (x) = \gamma_{\mu,1}x + \gamma_{\mu,2},
\end{align*}
are affine-linear functions, which are
determined by parameter tuples of matrices and vectors
\begin{equation*}
\gamma_{\sigma} \in D_{\sigma} \subseteq (\mathbb{R}^{d \times d})^{d+1}, \quad \gamma_{\mu} \in D_{\mu} \subseteq \mathbb{R}^{d \times d} \times \mathbb{R}^d.
\end{equation*}
The parameter sets $D_{\sigma}$ and $D_{\mu}$ are chosen to be compact. Together with a suitable compact parameter set $D_{\varphi} \subseteq \mathbb{R}^{k}$ for the initial function $\varphi_\gamma  $, we obtain
\begin{equation*}
    \gamma := (\gamma_{\sigma}, \gamma_{\mu}, \gamma_{\varphi}) \in D_{\sigma} \times D_{\mu} \times D_{\varphi} := D  \subseteq (\mathbb{R}^{d \times d})^{d+1}  \times (\mathbb{R}^{d \times d} \times \mathbb{R}^d) \times \mathbb{R}^{k}. 
\end{equation*}
This leads to an input dimension of our neural network $\Phi$ of
\begin{equation*}
\begin{aligned}
\text{dim}_{\text{in}}(\Phi) &= \text{dim}(D   \times [v,w]^d \times [0,T]) = d^3 + 2d^2 + 2d + 1 + k.
\end{aligned}
\end{equation*}
Kolmogorov PDEs with affine-linear coefficient functions regularly appear in applications; the heat equation from physics and the classical and generalized Black-Scholes equations from computational finance are important examples of Kolmogorov PDEs with affine-linear coefficient maps~\cite{EKSTROM2010498, pironneau09}. Note that while affine-linear coefficient functions are important in practice, computationally fast to evaluate, and easy to parametrize, the presented method is not restricted to the case of affine-linear coefficients and can as well be used in a substantially more general setting.
\section{Numerical Results} \label{num_results}
We implemented the framework described in Section \ref{section_algorithm} in PyTorch~\cite{NEURIPS2019_9015} and computed our results on a Nvidia DGX-1 using Tune~\cite{liaw2018tune} for experiment execution and hyperparameter optimization. In this section, we describe our setting and present four encouraging demonstrations of the performance of our algorithm.\footnote{For the implementation details we refer the reader to Section~\ref{sec:impl} in the appendix and the repository associated with this work on~\url{https://github.com/juliusberner/deep_kolmogorov}.}

For the neural network $\Phi$ we propose a \emph{Multilevel architecture} which is inspired by multilevel techniques such as Multilevel Monte Carlo methods~\cite{giles_2015}, network architectures in~\cite{he2016deep, yu2018deep}, and the architecture for the squaring function used in the proofs of our theoretical results in Section~\ref{sec:theory}, see also~\cite[Figure 2c]{yarotsky2017error}.
One can view the output of the network $\Phi$ as a sum $\sum_{l=0}^{L-1} \Phi_l$ of sub-networks $\Phi_l$ with $2^l$ layer each. We think of the shallow network $\Phi_0$ as computing a coarse approximation of $\bar{u}$ and of the deep networks $\Phi_l$ (with $l\ge1$) as approximately learning the residuals
$\bar{u} - \sum_{i=0}^{l-1} {\Phi_i}$. To facilitate optimization we normalize our inputs\footnote{We know the underlying (uniform) distribution and therefore can normalize each input in an exact manner.} and to enhance expressivity we add an initial layer which increases the width by a given factor $q$. The Multilevel architecture is depicted in Figure~\ref{fig:net} and described in detail in Definition~\ref{def:multi_lvl} in the appendix. 

Our optimized hyperparameters as well as an ablation study of our architecture and training scheme can 
be found in Sections~\ref{sec:impl} and~\ref{sec:add_num} in the appendix.
For all our experiments we were able to stick to a similar setup which depicts its robustness and general applicability. 
This is also mirrored by the small standard deviations of our considered errors across independent runs in Tables~\ref{table:bs},~\ref{table:basket},~\ref{table:heat}, and~\ref{table:heat_gaussian}. 
These tables report average runtimes (in seconds), average approximation errors, and their standard deviations w.r.t.\@ $4$ independent runs each $4000$ gradient descent steps.
As an evaluation metric we approximately computed $\mathcal{L}^1$-errors via Monte Carlo sampling, that is
\begin{equation}
 \label{eq:lp_err}
 \textstyle
   \left\|\tfrac{\Phi(\Lambda) - \bar{u}(\Lambda)}{1+|\bar{u}(\Lambda)|} \right\|_{\mathcal{L}^1} :=  \mathbbm{E}\left[\tfrac{|\Phi(\Lambda) - \bar{u}(\Lambda)|}{1+|\bar{u}(\Lambda)|} \right]  
   \approx   \tfrac{1}{n} \sum_{i=1}^n \tfrac{|\Phi(\lambda_i) - \bar{u}(\lambda_i)|}{1+|\bar{u}(\lambda_i)|} 
\end{equation}
with 
$n\in\mathbb{N}$ realizations $(\lambda_i)_{i=1}^n$ of i.i.d.\@ samples drawn from the distribution of $\Lambda$ (drawn independently of the training data in~\eqref{eq:emp_learn} and drawn independently for each evaluation step). This means that we always evaluate our model w.r.t.\@ to the parametric solution map $\bar{u}$ on unseen input data; if no closed-form solution for $\bar{u}$ is available, as in the case of the Basket option in Section~\ref{sec:basket} below, we approximate $\bar{u}(\lambda_i)$ pointwise via Monte Carlo sampling, i.e.\@
\begin{equation}
\label{eq:MC}
\textstyle
    \bar{u}(\lambda_i)=\bar{u}(\gamma_i,x_i,t_i) = \E [\varphi_{\gamma_i}(S_{\gamma_i,x_i,t_i})] \approx \tfrac{1}{m} \sum_{j=1}^m \varphi_{\gamma_i}(s_j)
\end{equation}
where $(s_j)_{j=1}^m$ are realizations of i.i.d.\@ samples drawn from the distribution of the Euler-Maruyama approximation $S^{M,M}_{\lambda_i}$ (drawn independently of the training data in~\eqref{eq:emp_learn} and drawn independently for each point and each evaluation step).
Note that~\eqref{eq:lp_err} is invariant under scaling of the hypercubes and locally corresponds to relative errors where the solution $\bar{u}$ is large and absolute errors where it is small, which in particular prevents division by zero.

\subsection{Black-Scholes Options Pricing Model} \label{section_black_scholes}
Our first example shows that neural networks are capable of learning a parametric version of the highly-celebrated Black-Scholes option pricing model~\cite{black1973pricing}.
We consider a European put option which gives its owner the right, but not the obligation, to sell a single
underlying financial asset at a specified strike price and at a given time. Formally, this corresponds to $d=1$ and
\begin{equation*}
    \sigma_\gamma  (x) = \gamma_{\sigma}x, \quad \mu_\gamma  (x) = 0, \quad    \varphi_\gamma(x) = \max\{ \gamma_\varphi-x ,0\}, \quad x\in\R,
\end{equation*}
with\footnote{Note that $\sigma_\gamma = \sigma_{\gamma,1}$ in the formal framework described in Section \ref{section_affine_linear} but here and in the following we use the natural identifications, e.g.\@ $D_\sigma\cong D_\sigma \times \{0\}.$} $\gamma_\sigma \in D_\sigma \subseteq \R$ and $\gamma_\varphi \in D_\varphi \subseteq \R$.
Effectively, this leads to an input dimension of our neural network $\Phi$ of 
$
\text{dim}_{\text{in}}(\Phi) = 4. 
$
In case of the present Black-Scholes model, the associated SDE in~\eqref{equation_sde_general} can actually be solved explicitly; it gives rise to geometric Brownian motion with uniformly distributed volatility $\Gamma_\sigma \in D_\sigma $, initial value $X\in [v,w]$, and stopping time $\mathcal{T} \in [0,T]$, i.e.
\begin{equation*} 
    S_{\Lambda}  = Xe^{-0.5\mathcal{T}\, \Gamma_\sigma^2 + \sqrt{\mathcal{T}} \, \Gamma_\sigma N}
\end{equation*}
where $N\sim \mathcal{N}(0,1)$ is normally distributed and independent of $\Lambda$.
We thus obtain an analytic expression for the parametric PDE solution,
\begin{equation*}
    \bar{u}(\gamma,x,t) = \gamma_\varphi \Psi(h_\gamma(x, t)+ \sqrt{t} \, \gamma_\sigma) -  x \Psi (h_\gamma(x,t)),
\end{equation*}
and the partial derivatives, e.g.\@
\begin{equation*}
    \tfrac{\partial \bar{u}}{\partial \gamma_\sigma}(\gamma,x,t) = x \sqrt{t} \Psi'(-h_\gamma(x,t)),
\end{equation*}
where 
\begin{equation*}
\Psi(z):= \tfrac{1}{2}\big(1 +\operatorname{erf}\big(\tfrac{z}{\sqrt{2}}\big)\big) \quad \text{and} \quad  h_\gamma(x,t) := -\tfrac{1}{\sqrt{t} \, \gamma_\sigma} \big( \ln\big(\tfrac{x}{\gamma_\varphi}\big) + \tfrac{t\gamma_\sigma^2}{2}\big), 
\end{equation*}
see~\cite[Section 13.7]{baldi2017stochastic}. This analytic expression can be used to evaluate the performance of our algorithm.
We point out that the partial derivatives of $\bar{u}$ are crucial in option pricing and each of them is associated with a distinct economic interpretation. They are often referred to as \textit{Greeks} and describe the sensitivity of the option price w.r.t.\@ different model parameters, see for instance \cite{baldi2017stochastic, seydel2006tools}. The most prominent Greeks are given by
\begin{equation*}
    \Delta = \tfrac{\partial \bar{u}}{\partial x},  \quad   \operatorname{Vega} = \tfrac{\partial \bar{u}}{\partial \gamma_\sigma}, \quad \Theta = -\tfrac{\partial \bar{u}}{\partial t}.
\end{equation*}
On the basis of the proposed algorithm, our neural network $\Phi$ is capable of learning the parametric solution map $\bar{u}$ of the above problem in $24000$ gradient updates up to an average $\mathcal{L}^1$-error of $0.0011$, see Table~\ref{table:bs} and Figure~\ref{fig:error_bs1}. As expected, the partial derivatives of our network $\Phi$ (computed via automatic differentiation) approximate the partial derivatives of $\bar{u}$ as can be seen in Figure~\ref{fig:error_greek1}. Further evidence can be found in Figures~\ref{fig:bs2},~\ref{fig:greek2},~\ref{fig:error_bs2}, and~\ref{fig:error_greek2} in the appendix.
Even though the parametric PDE problem can be solved explicitly in this special case, 
we use this relatively simple example for the purpose of illustrating our algorithm in an intuitive setting. As we will see below, the proposed algorithm is by no means restricted to such basic examples and can be applied successfully to much more complex and high-dimensional problems as well.
\begin{figure}[!tb]
    \centering
    \begin{minipage}[b]{0.45\textwidth}
        \centering
        \includegraphics[width=\linewidth]{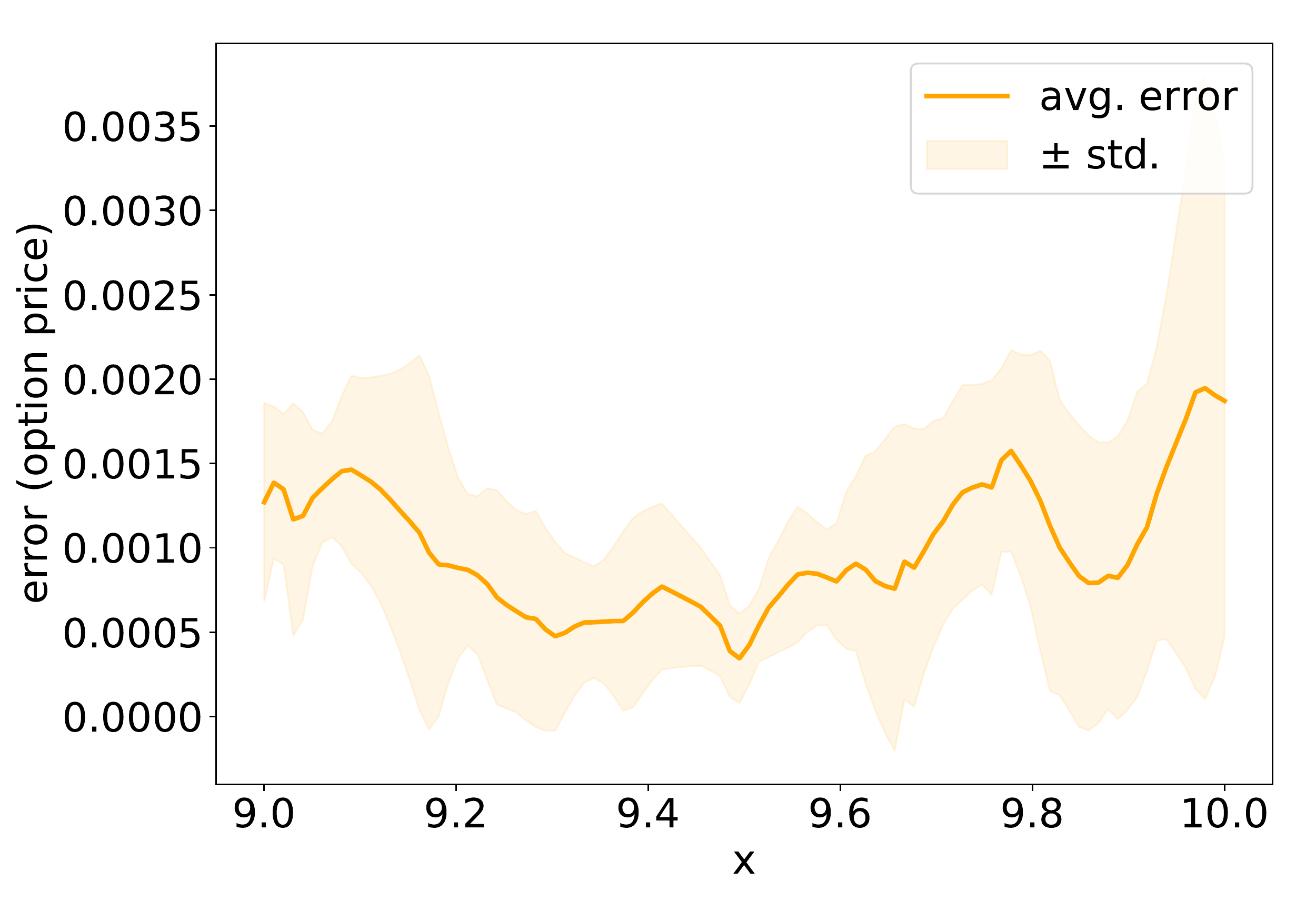}
        \caption{Shows the average prediction error $\tfrac{\vphantom{\frac{\partial \Phi}{\partial \gamma_\sigma}}|\Phi(\gamma, {\cdot}, t) -\bar{u}(\gamma, {\cdot}, t)|}{\vphantom{\frac{\partial \Phi}{\partial \gamma_\sigma}}1+|\bar{u}(\gamma, {\cdot}, t)|}$ and its standard deviation at $t=0.5$, $\gamma_\sigma=0.35$, and $\gamma_\varphi=11$.}
        \label{fig:error_bs1}
    \end{minipage}
    \hspace{2em}
    \begin{minipage}[b]{0.45\textwidth}
        \centering
        \includegraphics[width=\linewidth]{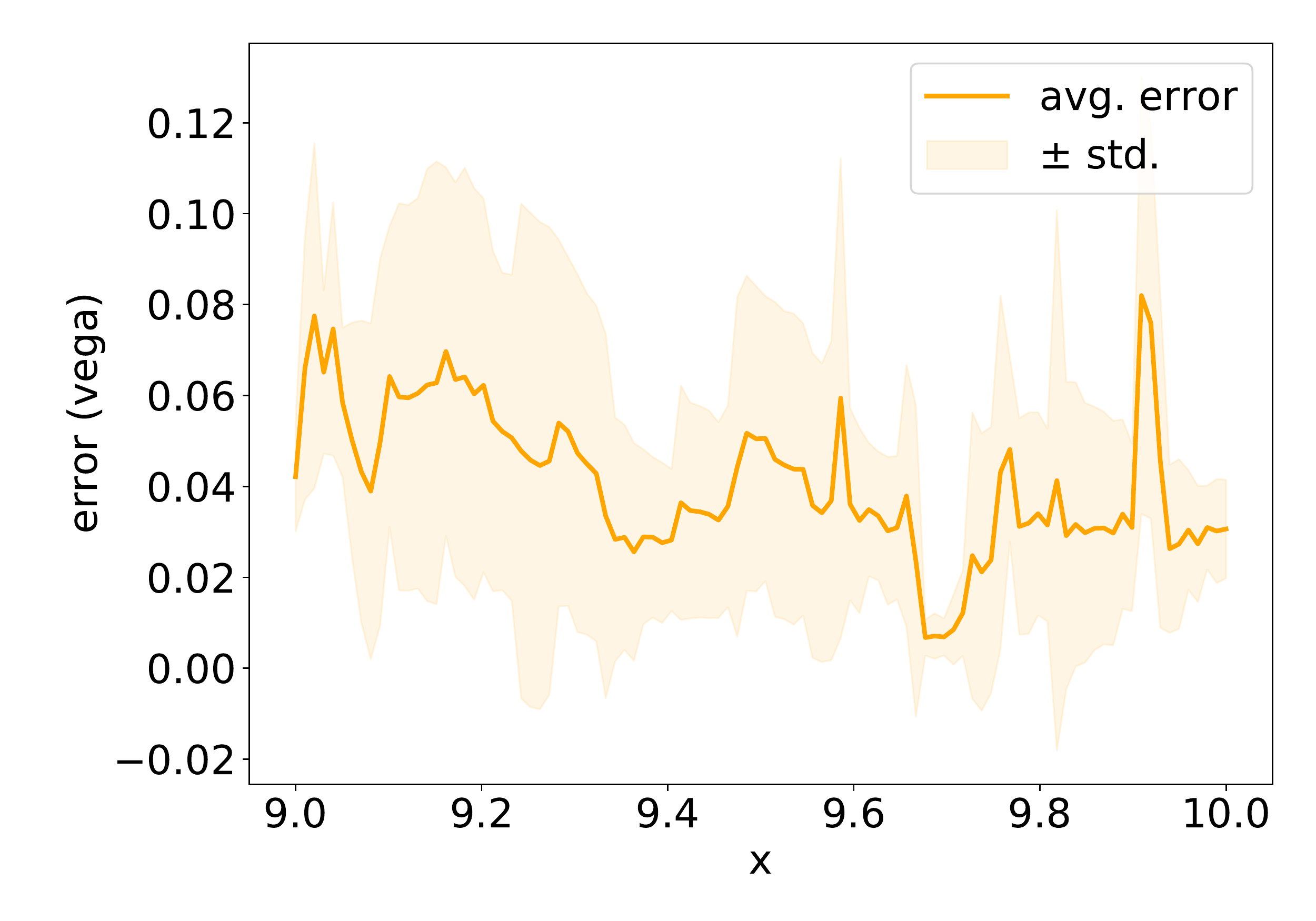}
        \caption{Shows the average error of the Vega $\tfrac{|\frac{\partial \Phi}{\partial \gamma_\sigma}
        (\gamma, {\cdot}, t) -\frac{\partial \bar{u}}{\partial \gamma_\sigma}(\gamma, {\cdot}, t)|}{1+|\frac{\partial \bar{u}}{\partial \gamma_\sigma}(\gamma, {\cdot}, t)|}$ and its standard deviation at $t=0.5$, $\gamma_\sigma=0.35$, and $\gamma_\varphi=11$.}
        \label{fig:error_greek1}
    \end{minipage}
\end{figure}
\subsection{Basket Put Option}
\label{sec:basket}
In the following we show that we can obtain comparable results to Section \ref{section_black_scholes} in the case of a considerably more complicated Basket put option pricing problem, where analytical solutions of the PDE and the SDE are lacking. In such cases, our algorithm allows practitioners to nevertheless gain valuable insights into the behaviour of the PDE solution manifold as input parameters vary. By means of our trained model $\Phi$ one can easily compute sensitivity values $\tfrac{\partial\Phi}{\partial\gamma}\approx \tfrac{\partial\bar{u}}{\partial\gamma}$, $\tfrac{\partial\Phi}{\partial t}\approx \tfrac{\partial\bar{u}}{\partial t}$, and $\tfrac{\partial\Phi}{\partial x}\approx \tfrac{\partial\bar{u}}{\partial x}$ via automatic differentiation or fit the parameter $\gamma$ to a real-world data-set $((x_i,t_i), u_\gamma(x_i,t_i))_{i=1}^m$ with unknown $\gamma$ by minimizing $\min_{\gamma \in D} \sum_{i=1}^m \big(\Phi(\gamma, x_i, t_i)-u_\gamma(x_i,t_i)\big)^2$ via stochastic gradient descent. Moreover, one can obtain estimates for probabilistic quantities related to uncertainty such as
\begin{equation*}
\textstyle
   \mathbbm{V}[u_\Xi(x,t)]\approx\mathbbm{V}[\Phi(\Xi,x,t)]\approx \tfrac{1}{m-1}\sum_{i=1}^m \big(\Phi(\xi_i, x, t)-\tfrac{1}{n}\sum_{j=1}^m\Phi(\xi_j, x, t)\big)^2 
\end{equation*}
where $(\xi_i)_{i=1}^m$ are realizations of i.i.d.\@ samples drawn from the distribution of a random variable $\Xi$ of interest. None of these types of insights were accessible before the presented deep learning method.

We proceed by demonstrating the performance of the proposed algorithm for a general multidimensional affine-linear setting as described in Section~\ref{section_affine_linear}. To this end, let $d=3$ and define the initial condition via
\begin{equation*}
\textstyle
   \varphi_\gamma(x) = \max\big\{ \gamma_\varphi-\tfrac{1}{3}\sum_{i=1}^3 x_i ,0\big\}, \quad x \in \R^3,
\end{equation*}
with $\gamma_{\varphi} \in D_{\varphi} \subseteq \mathbb{R}$.
This corresponds to the situation of a Basket put option in a very general multidimensional Black-Scholes model with $3$ potentially highly correlated assets. Note that within the above setup, the input dimension of our neural network $\Phi$ is given by
\begin{equation*}
\text{dim}_{\text{in}}(\Phi) = d^3 + 2d^2 + 2d + 1 + 1 = 53. 
\end{equation*}
To generate samples of our target variable $\varphi_{\Gamma}(S_{\Lambda})$, we simulate solutions of the SDE in~\eqref{equation_sde_general} using the Euler-Maruyama scheme~\eqref{eq:EM} with $M=25$ equidistant steps. Moreover, we use a Monte Carlo approximation with $m=2^{20}$ samples to compute the pointwise evaluation of the reference solution $\bar{u}(\lambda_i)$ according to~\eqref{eq:MC} as needed for the error estimation in~\eqref{eq:lp_err}. Despite the considerably higher dimension of this problem compared with the previous problem from Section~\ref{section_black_scholes}, our deep learning approach shows almost the same approximation behavior, see Table~\ref{table:basket}. This remarkably weak dependence on the dimension of the input data is further supported by the next examples from physical modelling, where we shall increase the dimensionality of the studied problems even further.

\begin{table}[!tb]
    \centering
    \begin{minipage}{.45\textwidth}
        \centering
        \caption{\vphantom{p}Results for the Black-Scholes model}
\begin{tabular}{llll}
\toprule
  step & avg. time (s)  &    avg. $\mathcal{L}^1$-error \\
\midrule
     0 &          0 $\pm$ 0 &  0.6812 $\pm$ 0.0704 \\
  4k &        471 $\pm$ 3 &  0.0088 $\pm$ 0.0056  \\
  8k &        943 $\pm$ 6 &  0.0062 $\pm$ 0.0025 \\
 12k &       1413 $\pm$ 9 &   0.0030 $\pm$ 0.0004 \\
 16k &      1885 $\pm$ 11 &  0.0017 $\pm$ 0.0001 \\
 20k &      2356 $\pm$ 14 &  0.0013 $\pm$ 0.0002 \\
 24k &      2827 $\pm$ 17 &  0.0011 $\pm$ 0.0001 \\
\bottomrule
&     &  \\
\end{tabular}
    \label{table:bs}
    \end{minipage}
    \hspace{2em}
    \begin{minipage}{0.45\textwidth}
        \centering
        \caption{Results for the Basket option}
\begin{tabular}{llll}
\toprule
  step & avg. time (s) &    avg. $\mathcal{L}^1$-error   \\
\midrule
     0 &          0 $\pm$ 0 &  0.7912 $\pm$ 0.0276  \\
  4k &        811 $\pm$ 7 &  0.0131 $\pm$ 0.0019  \\
  8k &       1614 $\pm$ 4 &  0.0087 $\pm$ 0.0013  \\
 12k &      2434 $\pm$ 28 &  0.0062 $\pm$ 0.0009  \\
 16k &      3236 $\pm$ 27 &  0.0058 $\pm$ 0.0011  \\
 20k &     4162 $\pm$ 154 &  0.0046 $\pm$ 0.0007  \\
 24k &     5077 $\pm$ 291 &  0.0042 $\pm$ 0.0002  \\
 28k &     6024 $\pm$ 463 &  0.0039 $\pm$ 0.0001  \\
\bottomrule
\end{tabular}
    \label{table:basket}
    \end{minipage}
\end{table}
\subsection{Heat Equation with Varying Diffusion Coefficients} 
\label{sec:heat}
In this Section, we present two examples of high-dimensional heat equations in $d =10$ and $d=150$ dimensions with paraboloid and Gaussian initial conditions
\begin{equation*}
    \varphi_\gamma(x) = \|x\|^2 \quad \text{(paraboloid)} \quad \text{and} \quad \varphi_\gamma = e^{-\|x\|^2} \quad  \text{(Gaussian)}.
\end{equation*}
This formally corresponds to
\begin{equation*}
    \sigma_\gamma  (x) = \gamma_\sigma \quad  \text{and} \quad \mu_\gamma  (x) = 0
\end{equation*}
where we use a matrix
$\gamma_\sigma \in D_\sigma \subseteq \R^{10\times 10}$ for the paraboloid case and a scalar $\gamma_\sigma \in D_\sigma \subseteq \R$ in the Gaussian case, leading to 
input dimensions of our models $\Phi$ of
\begin{equation*}
\text{dim}_{\text{in}}(\Phi) = d^2 + d + 1 = 111 \quad \text{(paraboloid)} \quad \text{and} \quad  \text{dim}_{\text{in}}(\Phi) = d + 1 + 1 = 152 \quad \text{(Gaussian)}.
\end{equation*}

Notice that here the solution of the corresponding SDE can be directly sampled via a Brownian motion with uniformly distributed scaling $\Gamma_\sigma$, initial position $X$, and stopping time $\mathcal{T}$, i.e.
\begin{equation*} 
  S_{\Lambda}  =   X +  \sqrt{\mathcal{T}} \, \Gamma_\sigma N
\end{equation*}
where $N\sim \mathcal{N}(0,I_d)$ is normally distributed and independent of $\Lambda$, see~\cite[Section 3.2]{beck2018solving}.
For evaluation purposes, these examples were purposefully constructed to have analytic expressions for the parametric solution maps $\bar{u}$, which are given by
\begin{equation*} 
    \bar{u}(\gamma_{\sigma}  ,x,t) = \|x\|^2 + t \operatorname{Trace}(\gamma_\sigma \gamma_\sigma^{*}) \quad \text{(paraboloid)}, \quad \bar{u}(\gamma_{\sigma}  ,x,t) = \tfrac{e^{-\tfrac{\|x\|^2}{1 + 2t\gamma_{\sigma}^2}}}{(1 + 2t\gamma_{\sigma}^2)^{d/2}} \quad \text{(Gaussian)}.
\end{equation*}
However, in almost all other practical cases an analytic solution for $\bar{u}$ is impossible to obtain and numerical methods are the only path forward.

\begin{table}[!tb]
    \centering
    \begin{minipage}{.45\textwidth}
        \centering
        \caption{Results for the heat equation with paraboloid initial condition}
        \begin{tabular}{llll}
\toprule
  step & avg. time (s) &    avg. $\mathcal{L}^1$-error \\
\midrule
     0 &          0 $\pm$ 0 &  0.9609 $\pm$ 0.0052\\
  4k &      1904 $\pm$ 19 &  0.0150 $\pm$ 0.0008\\
  8k &      3808 $\pm$ 37 &  0.0120 $\pm$ 0.0007\\
 12k &      5712 $\pm$ 57 &  0.0093 $\pm$ 0.0006  \\
 16k &      7616 $\pm$ 76 &  0.0068 $\pm$ 0.0001  \\
 20k &      9520 $\pm$ 95 &  0.0062 $\pm$ 0.0003  \\
 24k &    11424 $\pm$ 114 &  0.0057 $\pm$ 0.0001  \\
 28k &    13328 $\pm$ 132 &  0.0056 $\pm$ 0.0000  \\
\bottomrule
\end{tabular}
    \label{table:heat}
    \end{minipage}
    \hspace{2em}
    \begin{minipage}{0.45\textwidth}
        \centering
        \caption{Results for the heat equation with Gaussian initial condition\vphantom{p}}
         \begin{tabular}{llll}
\toprule
  step & avg. time (s) &    avg. $\mathcal{L}^1$-error   \\
\midrule
     0 &          0 $\pm$ 0 &  0.2035 $\pm$ 0.0714 \\
  4k &      2070 $\pm$ 40 &  0.0123 $\pm$ 0.0047 \\
  8k &      4131 $\pm$ 82 &  0.0050 $\pm$ 0.0018 \\
 12k &     6192 $\pm$ 124 &  0.0051 $\pm$ 0.0022 \\
 16k &     8258 $\pm$ 165 &  0.0033 $\pm$ 0.0015 \\
 20k &    10323 $\pm$ 206 &  0.0025 $\pm$ 0.0011 \\
 24k &    12388 $\pm$ 247 &  0.0024 $\pm$ 0.0008 \\
 28k &    14454 $\pm$ 290 &  0.0019 $\pm$ 0.0002 \\
\bottomrule
\end{tabular}
    \label{table:heat_gaussian}
    \end{minipage}
\end{table}

The above dimensionality settings represent regimes which are completely out of scope for all preexisting numerical schemes. Nevertheless, Tables~\ref{table:heat} and~\ref{table:heat_gaussian} confirm that our proposed deep learning method once again efficiently converges to the desired parametric solution map $\bar{u}$. 
Our results empirically demonstrate that, contrary to conventional numerical solvers, our deep learning based method does not suffer from the curse of dimensionality, see also Figure~\ref{fig:heat_dims} in the appendix. 
We will rigorously prove this fact in the next section.

\section{Theoretical Guarantees}

As a first example, we stick to the heat equation with paraboloid initial condition from above and show that neural networks are capable of simultaneously approximating the parametric solution map $\bar{u}$ and its gradient with the number of network parameters scaling only polynomially in the dimension $d$, see Theorem~\ref{thm:app_approx_para} in the appendix for a proof.
Such an approximation guarantee without curse of dimensionality ensures that sensitivity analysis is possible even in very high dimensions.
\begin{theorem}[Sobolev Approximation]
There exists a neural network $\Phi$ with ReLU activation function and $\mathcal{O}(d^4\log(d/\eps))$ parameters satisfying that
\begin{equation*}
\|\Phi - \bar{u}\|_{\mathcal{L}^{\infty}(D   \times [v,w]^d \times [0,T])}\le \eps \quad  \text{and} \quad \|\nabla\Phi - \nabla\bar{u}\|_{\mathcal{L}^{\infty}(D   \times [v,w]^d \times [0,T])}\le \eps.
\end{equation*}
\end{theorem}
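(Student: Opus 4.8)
The plan is to exploit that for the paraboloid initial condition the solution map is an explicit quadratic polynomial,
\[
\bar{u}(\gamma_\sigma, x, t) = \|x\|^2 + t\,\trace(\gamma_\sigma\gamma_\sigma^{*}) = \sum_{i=1}^d x_i^2 + t\sum_{i,j=1}^d (\gamma_\sigma)_{ij}^2,
\]
so it suffices to build a ReLU network that approximates the scalar squaring map together with a scalar multiplication, and then assemble $\mathcal{O}(d^2)$ copies of these gadgets using linear (summation) layers. First I would invoke the Yarotsky sawtooth construction~\cite{yarotsky2017error} to obtain, for every $\delta>0$, a ReLU network $s_\delta\colon\R\to\R$ with $\mathcal{O}(\log(1/\delta))$ parameters approximating $z\mapsto z^2$ on the relevant compact interval. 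Crucially, I would verify that this holds in the Sobolev sense $W^{1,\infty}$: since the piecewise-linear interpolant of $z^2$ at dyadic nodes has slope on each subinterval equal to the true derivative $2z$ at that subinterval's midpoint, one obtains $\|s_\delta-(\cdot)^2\|_{\infty}=\mathcal{O}(\delta^2)$ together with $\|s_\delta'-2(\cdot)\|_{\infty}=\mathcal{O}(\delta)$ at the same logarithmic cost. From $s_\delta$ I would derive an approximate multiplication gadget $P_\delta(a,b)\approx ab$ via the polarization identity $ab=\tfrac14\big((a+b)^2-(a-b)^2\big)$, which inherits both the value and the derivative estimates.

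Next I would assemble the full network. The term $\|x\|^2$ is approximated by summing $s_\delta(x_i)$ over the $d$ spatial coordinates; the scalar $Z:=\trace(\gamma_\sigma\gamma_\sigma^{*})$ is approximated by $Z_\delta:=\sum_{i,j}s_\delta((\gamma_\sigma)_{ij})$ over the $d^2$ matrix entries; and the coupling term is approximated by feeding $t$ and $Z_\delta$ into $P_\delta$. Setting $\Phi:=\sum_i s_\delta(x_i)+P_\delta(t,Z_\delta)$ and using compactness of $D\times[v,w]^d\times[0,T]$ (which bounds all arguments and bounds $Z$ by $\mathcal{O}(d^2)$), I would choose $\delta$ polynomially small in $\eps$ and $1/d$ to force the $\mathcal{L}^\infty$ error below $\eps$; the summation layers are exact, so the only errors are the per-gadget square and multiplication errors multiplied by the $\mathcal{O}(d^2)$ summands. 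The parameter count then follows by placing the $\mathcal{O}(d^2)$ gadgets in parallel, so the hidden layers carry width $\mathcal{O}(d^2)$ across depth $\mathcal{O}(\log(d/\eps))$, yielding $\mathcal{O}(d^4\log(d/\eps))$ weights.

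For the gradient bound I would differentiate $\Phi$ componentwise and match it against $\nabla\bar{u}$, whose blocks are $\nabla_x\bar{u}=2x$, $\partial_t\bar{u}=Z$, and $\nabla_{\gamma_\sigma}\bar{u}=2t\,\gamma_\sigma$. The spatial and time derivatives reduce directly to the Sobolev estimates on $s_\delta'$ and on $\partial_1 P_\delta$. The delicate block is $\partial_{(\gamma_\sigma)_{kl}}\Phi=\partial_2 P_\delta(t,Z_\delta)\,s_\delta'((\gamma_\sigma)_{kl})$, which by the product rule must reproduce $2t(\gamma_\sigma)_{kl}$; here I would combine $\partial_2 P_\delta\approx t$ with $s_\delta'\approx 2(\gamma_\sigma)_{kl}$, controlling the cross terms via the uniform bounds on $t$ and on the matrix entries. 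I expect this product-rule step to be the main obstacle, as it demands Sobolev accuracy of the squaring \emph{and} the multiplication gadget simultaneously and a careful propagation of the derivative error through the product while the intermediate quantity $Z$ itself grows like $d^2$. Quantifying how the requisite precision $\delta$ must scale with $d$ to keep all $\mathcal{O}(d^2)$ gradient components below $\eps$ is the crux of the argument and is precisely what fixes the polynomial dependence in the final parameter bound.
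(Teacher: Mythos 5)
Your proposal is correct and follows essentially the same route as the paper's proof: a Yarotsky-type ReLU squaring gadget with $W^{1,\infty}$ control, a multiplication gadget obtained via the polarization identity, parallel assembly over the $\mathcal{O}(d^2)$ coordinates, and error propagation through a chain/product rule for ReLU networks. The only cosmetic difference is that the paper realizes $\sum_{i,j}\operatorname{mult}\bigl(t,\operatorname{sq}((\gamma_\sigma)_{ij})\bigr)$ with one multiplication per matrix entry, whereas you aggregate the squares first and apply a single multiplication gadget on a domain of size $\mathcal{O}(d^2)$ --- both variants give the stated $\mathcal{O}(d^4\log(d/\eps))$ parameter bound.
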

Let us now consider the heat equation with varying diffusivity and Gaussian initial condition.
In fact, our framework allows us to rigorously prove sample complexity estimates for this problem which represents an almost unique scenario for deep learning based methods. This is rendered possible by the structure of the underlying parametric Kolmogorov PDE and its associated SDE which allows us to describe the distribution of the predictor and target variable, simulate i.i.d.\@ samples, and infer regularity properties on the regression function. We briefly sketch the theorem in the following; the precise formulation and the proof is given in Theorem~\ref{thm:app_gen} in the appendix. 
\begin{theorem}[Generalization]
\label{thm:gen}
Using $s\sim(d/\eps)^2\operatorname{polylog}(d/\eps)$ many samples, every empirical risk minimizer $\hat{\Phi}$ of~\eqref{eq:emp_learn}
in a suitable hypothesis space $\mathcal{H}$ of neural networks with ReLU activation function, $\mathcal{O}(\operatorname{polylog}(d/\eps))$ layers, $\mathcal{O}(d)$ neurons per layer, and parameters bounded by $\mathcal{O}(1)$ satisfies with high probability that
\begin{equation*}
    \tfrac{1}{V}\|\hat{\Phi} - \bar{u}\|^2_{\mathcal{L}^2(D   \times [v,w]^d \times [0,T])} \le \eps
\end{equation*}
where $V:=\operatorname{vol}(D   \times [v,w]^d \times [0,T])$.
\end{theorem}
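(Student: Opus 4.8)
\emph{The plan is to} control the excess risk and convert it into the stated $\mathcal L^2$-bound. Write $\Omega:=D\times[v,w]^d\times[0,T]$, let $\mathcal E(f):=\E[(f(\Lambda)-Y)^2]$ denote the risk and $\hat{\mathcal E}(f):=\tfrac1s\sum_{i=1}^s(f(\lambda_i)-y_i)^2$ the empirical risk. Since Theorem~\ref{thm:learn_prob} identifies $\bar u=\E[Y\mid\Lambda]$ as the regression function, the residual $Y-\bar u(\Lambda)$ is orthogonal to every $\sigma(\Lambda)$-measurable function, giving the bias--variance identity
\[
  \mathcal E(f)-\mathcal E(\bar u)=\E\big[(f(\Lambda)-\bar u(\Lambda))^2\big]=\tfrac1V\|f-\bar u\|_{\mathcal L^2(\Omega)}^2,
\]
where the last step uses that $\Lambda$ is uniform on $\Omega$ with density $1/V$. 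It therefore suffices to bound the excess risk $\mathcal E(\hat\Phi)-\mathcal E(\bar u)$ by $\eps$. I split it in the standard way as
\[
  \mathcal E(\hat\Phi)-\mathcal E(\bar u)\le\underbrace{2\sup_{f\in\mathcal H}\big|\mathcal E(f)-\hat{\mathcal E}(f)\big|}_{\text{generalization}}+\underbrace{\big(\mathcal E(\Phi^*)-\mathcal E(\bar u)\big)}_{\text{approximation}},
\]
for a best-in-class $\Phi^*\in\mathcal H$, where the ERM term $\hat{\mathcal E}(\hat\Phi)-\hat{\mathcal E}(\Phi^*)\le0$ has been absorbed.

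\emph{For the approximation term} I would construct $\Phi^*$ explicitly. Combining the two factors of the Gaussian solution into a single exponential gives
\[
  \bar u(\gamma_\sigma,x,t)=\exp\!\Big(-\tfrac{\|x\|^2}{1+2t\gamma_\sigma^2}-\tfrac d2\log\big(1+2t\gamma_\sigma^2\big)\Big).
\]
On the compact domain $\Omega$ the inner quantity $1+2t\gamma_\sigma^2$ lies in some interval $[1,C]$ with $C$ independent of $d$, so the reciprocal and logarithm are evaluated away from their singularities. I then assemble $\Phi^*$ from ReLU sub-networks approximating squaring and multiplication (\`a la Yarotsky), reciprocal, logarithm, and the exponential, each to the required accuracy using $\mathcal O(\mathrm{polylog}(d/\eps))$ depth, $\mathcal O(1)$ width and weights bounded by $\mathcal O(1)$; summing $\sum_{i=1}^d x_i^2$ to form $\|x\|^2$ needs width $\mathcal O(d)$. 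A final clipping to $[0,1]$ does not increase the error because $\bar u\in(0,1]$, and yields $\Phi^*\in\mathcal H$ with $\|\Phi^*-\bar u\|_{\mathcal L^\infty(\Omega)}\le\sqrt{\eps/2}$, so that $\mathcal E(\Phi^*)-\mathcal E(\bar u)\le\eps/2$, all while respecting the architecture of $\mathcal H$.

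\emph{For the generalization term} I exploit that, after clipping, both the targets $Y=e^{-\|S_\Lambda\|^2}\in(0,1]$ and all network outputs lie in $[0,1]$, so the quadratic loss is bounded. The class $\mathcal H$ is Lipschitz in its $W=\mathcal O(d^2\,\mathrm{polylog}(d/\eps))$ parameters uniformly over the compact input domain, with a Lipschitz constant at most polynomial in $d$; thus a $\delta$-net of $\mathcal H$ has $\log$-cardinality $\mathcal O(W\log(1/\delta))$. Hoeffding's inequality at each net point, a union bound, and the Lipschitz estimate to transfer the bound from the net to all of $\mathcal H$ give, with high probability,
\[
  \sup_{f\in\mathcal H}\big|\mathcal E(f)-\hat{\mathcal E}(f)\big|\lesssim\sqrt{\tfrac{W\,\mathrm{polylog}(d/\eps)}{s}}.
\]
Choosing $s\sim(d/\eps)^2\,\mathrm{polylog}(d/\eps)$ renders the right-hand side $\le\eps/4$; combined with the approximation bound this gives excess risk $\le\eps/2+\eps/2=\eps$ and hence the claim.

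\emph{The main obstacle} is the explicit ReLU construction of the Gaussian map inside the prescribed architecture: one must balance the accuracies of the exp/log/reciprocal/squaring gadgets so the composed error stays below the target while keeping weights $\mathcal O(1)$ and width $\mathcal O(d)$, the delicate point being the term $\tfrac d2\log(1+2t\gamma_\sigma^2)$, whose magnitude grows linearly in $d$ and forces the exponential to be approximated accurately on a range of size $\mathrm{poly}(d)$. A secondary technical point is tracking the parameter-Lipschitz constant of $\mathcal H$ so that the covering number contributes only a polylogarithmic factor to the sample complexity.
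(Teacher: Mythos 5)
Your overall strategy is correct and, for the decomposition and the generalization half, essentially identical to the paper's: the paper also uses the regression-function identity from Theorem~\ref{thm:app_learn_prob} to write $\tfrac{1}{V}\|\hat\Phi-\bar u\|^2_{\mathcal L^2}$ as generalization error plus approximation error with respect to a best-in-class $\Phi^*$, bounds the approximation error by an $\mathcal L^\infty$ network-approximation result, and handles the generalization error via a covering of the clipped hypothesis class with $\log$-cardinality $\mathcal O(d^2\operatorname{polylog}(d/\eps))$, uniform Lipschitz continuity of the (empirical) risk, Hoeffding's inequality, and a union bound, yielding $s\sim\log(n/\rho)/\eps^2$. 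Where you genuinely diverge is the approximation construction. You fold the solution into a single exponential $\exp\big(-\|x\|^2/(1+2t\gamma_\sigma^2)-\tfrac d2\log(1+2t\gamma_\sigma^2)\big)$ and approximate reciprocal, logarithm and exponential; as you note yourself, this forces the exponential gadget to act on an argument of magnitude $\Theta(d)$, and it also makes intermediate quantities (e.g.\@ $\|x\|^2$ and the coefficient $d/2$) grow linearly in $d$, which sits uneasily with the requirement that all parameters be bounded by $\mathcal O(1)$ — surmountable, but it requires spreading scalings over $\mathcal O(d)$ neurons or $\mathcal O(\log d)$ layers and a careful error-propagation argument through the exponential. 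The paper (Theorem~\ref{thm:app_approx_gauss}) instead writes $\bar u_d$ as a product of $d$ univariate factors, each of the form $\operatorname{mult}\big(g(\operatorname{mult}(x_i,f)),f\big)$ with $f=(1+2t\gamma_\sigma^2)^{-1/2}$ and $g(y)=e^{-y^2}$, and combines them by hierarchical pairwise multiplications; every intermediate value then lies in $[0,1]$, so all gadgets operate on $\mathcal O(1)$-sized domains, the $\mathcal O(1)$ weight bound is immediate, and the error through the $\log d$-level multiplication tree accumulates only linearly in $d$ (absorbed by taking each gadget accurate to $\eps/d$, which costs only $\operatorname{polylog}(d/\eps)$ depth). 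Both routes reach the same conclusion; the paper's factorized form is the one that makes the boundedness and error bookkeeping essentially free, and is what you would want to adopt to close the "main obstacle" you identify. One minor caveat: your claim that the parameter-to-function Lipschitz constant of $\mathcal H$ is polynomial in $d$ is not obviously true for depth $\operatorname{polylog}(d/\eps)$ (it can be quasi-polynomial), but since it enters the covering number only through a logarithm this does not affect the final sample complexity.
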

Note that it holds that 
\begin{equation*}
    \tfrac{1}{V}\|\cdot\|^2_{\mathcal{L}^2 (D\times [v,w]^d \times [0,T])}  = \|\cdot\|^2_{\mathcal{L}^2 (\P_\Lambda)}
\end{equation*}
where $\P_\Lambda$ is the uniform probability measure on $D\times [v,w]^d \times [0,T]$. Thus the estimate in Theorem~\ref{thm:gen} can be viewed as an estimate in the space $\mathcal{L}^2(\P_\Lambda)$ and we want to emphasize that our setting easily allows us to choose arbitrary probability measures $\P$ on $D\times [v,w]^d \times [0,T]$ and prove analogous results w.r.t.\@ the $\mathcal{L}^2(\P)$-norm.

\section{Conclusion}

The method introduced in this paper is the first deep learning algorithm for the numerical solution of parametric Kolmogorov PDEs and one of few existing algorithms whose use is computationally tractable in high-dimensional settings. The parametric nature of our approach readily allows for sensitivity analysis, model calibration, and uncertainty quantification, all which is of high interest in a variety of applications. Successful numerical experiments in both low- and high-dimensional settings empirically confirm the functionality of the proposed algorithm. In addition, we are able to provide theoretical guarantees for the applicability of our method in high-dimensions. 

Besides solving an important problem in scientific computing, our work introduces a class of learning problems that allows for the rigorous investigation of expressivity and sample complexity, along with stable and interpretable algorithms. Such strong results become possible by leveraging the mathematical structure of the learning problem associated with the parametric PDE. We anticipate that the formulation and study of such structured problems will constitute an important future direction of research in the scientific machine learning community as it can enable reliable and interpretable algorithms for the solution of previously intractable problems: in our case parametric families of Kolmogorov PDEs. This contributes substantially to areas like physical modelling of diffusion processes and computational finance, which all rely on the use of such PDEs.

\newpage

\section*{Broader Impact}

The deep-learning technique presented in this work is the first computationally scalable method for the numerical solution of high-dimensional parametric Kolmogorov PDEs. It is also the first method which allows for a straightforward sensitivity analysis of the associated high-dimensional PDE solution manifold with respect to input parameters. In addition, it newly allows for high-dimensional data-driven model calibration and uncertainty quantification. While it is a difficult task to precisely estimate the cascading effects of technological innovations on wider society, it is reasonable to assume that the ubiquity of Kolmogorov equations in science and engineering will lead to a positive impact of our new findings on a multitude of technical areas of social importance. 

As an example, Kolmogorov PDEs are heavily used in physics for the modelling of heat flow and diffusion processes \cite{pascucci2005kolmogorov, widder1976heat}. Simultaneously, Fokker-Planck equations, which take the form of Kolmogorov equations in particular special cases, are used in the geophysical and atmospheric sciences as modelling tools for climate change projections \cite{hasselmann1976stochastic, thuburn2005climate}. 
Our described algorithm has clear promise to make previously intractable high-dimensional physical models computationally accessible to scientists. Additionally, our method allows for an easy investigation of changes in complex model forecasts as input parameters are varied during sensitivity analysis. Such advancements have the potential to accelerate scientific research and can directly lead to better predictive models in applied physics and engineering. Reliable and efficient predictive models in turn are essential to rationally inform public policy. 

A conceivable risk posed by our work might come in the form of the uncritical use of our algorithm in applications related to financial engineering. The Black-Scholes equation and associated models have been notoriously misused in the last decades by semi-technical users working in financial sectors around the world \cite{jarrow2011risk, wilmott2000use}. The naive usage of technical tools in computational finance has thus likely been a contributing factor to periods of economic instability in recent history. Our technique can now add a powerful solver for high-dimensional parametric PDE problems to the tool kits of individual end-users in finance with various degrees of scientific expertise. Inexperienced users without appropriate quantitative background might be prone to erroneously taking the complexity of a high-dimensional financial model as an indicator for its accuracy. Therefore, one must take great care to systematically inform users without suitable experience in such a scenario that merely increasing the dimension of an inadequate financial model might not necessarily make its results more accurate.

In total, we are confident that the net impact of our work on the scientific community as well as broader society is positive. The probability of uncritical use of our technique and other algorithms in financial engineering can likely be substantially mitigated by targeted educational interventions and we would encourage practical research in this direction. At the same time, we note that our technical contribution is a general-purpose tool which has the potential to stimulate the acceleration of scientific progress in a wide variety of disciplines.
\begin{ack}
The research of Julius Berner was supported by the Austrian Science Fund (FWF) under
grant I3403-N32. The research of Markus Dablander was supported by the UK EPSRC Centre For Doctoral Training in Industrially Focused Mathematical Modelling (EP/L015803/1).
\end{ack}
\bibliography{mainbib}

\newpage
\appendix 
\section{Appendix}
\subsection{Theoretical Results} \label{sec:theory}
First we state our assumptions on the coefficient maps and initial conditions.
\begin{ass}[Coefficient Maps \& Initial Conditions]
\label{ass:coeff}
Let $D$ be a compact set in Euclidean space and for every $\gamma\in D$ let $\varphi_\gamma\in \mathcal{C}(\R^d,\R)$, $\sigma_\gamma \in \mathcal{C}(\R^d, \R^{d\times d})$, and $\mu_\gamma \in \mathcal{C}(\R^d,\R^d)$. Assume that for every $x\in\R^d$ the mappings
\begin{equation*}
    \gamma \mapsto \varphi_\gamma(x), \quad \gamma \mapsto \sigma_\gamma(x), \quad \text{and} \quad \gamma \mapsto \mu_\gamma(x)
\end{equation*}
are continuous and that there exists $c\in(0,\infty)$ such that for every $\gamma\in D$, $x,y\in\mathbb{R}^d$ it holds that\footnote{For a finite index set $I$ and $a,b\in\R^I$ we define $\|a\|= \sqrt{\sum_{i\in I} |a_i|^2}$ and $\langle a, b\rangle = \sum_{i\in I} a_i b_i$.}  \begin{enumerate}[label=(\roman*)]
    \item\label{it:ass_1}  $|\varphi_\gamma(x)-\varphi_\gamma(y)| \le c \|x-y\| (1+\|x\|^c+\|y\|^c)$,
    \item\label{it:ass_2}
    $\|\mu_\gamma(x)-\mu_\gamma(y)\| + \|\sigma_\gamma(x)-\sigma_\gamma(y)\| \le c \|x-y\| $, and
    \item\label{it:ass_3} $|\varphi_\gamma(0)| + \|\mu_\gamma(0)\| + \|\sigma_\gamma(0)\| \le c $.
\end{enumerate}
\end{ass}
Note that the continuity assumptions on $\sigma_\gamma$ and $\mu_\gamma$ and the condition in Item~\ref{it:ass_2} are fulfilled for the case of affine-linear coefficient functions as described in Section~\ref{section_affine_linear} and used in our examples. Further, the polynomial growth condition on the local Lipschitz constant in Item~\ref{it:ass_1}, the uniform bound in Item~\ref{it:ass_3}, and the continuity assumption on $\varphi_\gamma$ are also satisfied for all our considered examples. Under these assumptions we can precisely formulate the setting we are working in.
\begin{definition}[Parametric Kolmogorov PDEs]
For every $\gamma \in D$ let $u_\gamma\colon\R^d \times [0,\infty) \to 
\R$ be the unique continuous, at most polynomially growing function satisfying for every $x\in\R^d$ that $u_\gamma (x,0) = \varphi_\gamma(x)$ and satisfying that $u|_{\R^d\times(0,\infty)}$ is a viscosity solution of the Kolmogorov PDE
\begin{equation*}
\tfrac{\partial u_\gamma }{\partial t} (x,t) = \tfrac{1}{2} \trace\big(\sigma_\gamma(x)  [\sigma_\gamma(x)  ]^{*}(\nabla_x^2 u_\gamma)(x,t) \big) + \langle \mu_\gamma(x) , (\nabla_x u_\gamma)(x,t) \rangle
\end{equation*}
for $(x,t)\in \R^d\times(0,\infty)$, see~\cite[Corollary 4.17]{hairer2015loss}.
Let $(\Omega,\F,(\F_t)_{t\in [0,T]},\P)$ be a suitable filtered probability space satisfying the usual conditions, let 
\begin{equation}
\label{eq:bm}
    (B_t)_{t\ge 0}\colon [0,\infty)\times\Omega\to\R^d
\end{equation}
be a standard $d$-dimensional $(\F_t)$-Brownian motion, let $T\in(0, \infty)$, $v\in\R$, $w\in (v, \infty)$ and let 
\begin{equation*}
    \Lambda = (\Gamma, X, \mathcal{T})\colon \Omega \to D   \times [v,w]^d \times [0,T]
\end{equation*}
be a $\F_0$-measurable, uniformly distributed random variable. Let 
\begin{equation*}
    (S_{\gamma,x,t})_{t\ge0}\colon [0,\infty)\times\Omega\to\R^d, \quad (\gamma, x) \in D   \times \R^d, \quad \text{and} \quad (S_{\Gamma,X,t})_{t\ge 0}\colon [0,\infty) \times \Omega\to\R^d
\end{equation*}
be the up to indistinguishability unique $(\F_t)$-adapted stochastic processes with continuous sample paths satisfying that for every $(\gamma, x,t) \in D   \times \R^d\times [0,\infty)$ it holds $\P$-a.s.\@ that
\begin{align}  \label{eq:sde_app_simp}
S_{\gamma,x,t} = x + \int_0^t \mu_\gamma   (S_{\gamma,x,s}) ds + \int_0^t \sigma_\gamma  (S_{\gamma,x,s}) dB_s, 
\end{align}
and that for every $t \in [0,\infty)$ it holds $\P$-a.s.\@ that
\begin{equation} \label{eq:sde_app_gen}
    S_{\Gamma,X,t} = X + \int_0^t \mu_\Gamma   (S_{\Gamma,X,s}) ds + \int_0^t \sigma_\Gamma  (S_{\Gamma,X,s}) dB_s,
\end{equation} 
see, for instance,~\cite[Proof of Theorem 8.3]{gall2016}. For every $M\in\N$, $(\gamma, x,t) \in D   \times \R^d\times [0,\infty)$ let 
\begin{equation*}
     (S^{M,m}_{\gamma, x, t})_{m=0}^M\colon\{0,\dots, M\} \times \Omega \mapsto \R^d
\end{equation*}
be a stochastic process satisfying that $S^{M,0}_{\gamma, x, t} = x$
and for every $m\in\{0, \dots, M-1\}$ that
\begin{equation*}
     S^{M,m+1}_{\gamma, x, t} = S^{M,m}_{\gamma, x, t} + \mu_\gamma   (S^{M,m}_{\gamma, x, t}) \tfrac{t}{M} +  \sigma_\gamma  (S^{M,m}_{\gamma, x, t})\big(
        B_{\frac{(m+1)t}{M}} - B_{\frac{mt}{M}}\big)
\end{equation*}
and for every $M\in\N$ let 
\begin{equation*}
     (S^{M,m}_{\Gamma, X, \mathcal{T}})_{m=0}^M\colon\{0,\dots, M\} \times \Omega \mapsto \R^d
\end{equation*}
be a stochastic process satisfying that $S^{M,0}_{\Gamma, X, \mathcal{T}} = X$
and for every $m\in\{0, \dots, M-1\}$ that
\begin{equation*}
     S^{M,m+1}_{\Gamma, X, \mathcal{T}} = S^{M,m}_{\Gamma, X, \mathcal{T}} + \mu_\Gamma   (S^{M,m}_{\Gamma, X, \mathcal{T}}) \tfrac{\mathcal{T}}{M} +  \sigma_\Gamma  (S^{M,m}_{\Gamma, X, \mathcal{T}})\big(
        B_{\frac{(m+1)\mathcal{T}}{M}} - B_{\frac{m\mathcal{T}}{M}}\big).
\end{equation*}
Finally, let the random variable $Y\colon\Omega \mapsto \R$ be given by
\begin{equation*}
    Y := \varphi_\Gamma(S_\Lambda) = \varphi_\Gamma(S_{\Gamma,X,\T})
\end{equation*}
and for every $M\in\N$ let the random variable $Y^M\colon\Omega \mapsto \R$ be given by
\begin{equation*}
  Y^M :=\varphi_\Gamma(S^{M,M}_\Lambda) = \varphi_\Gamma(S^{M,M}_{\Gamma, X, \mathcal{T}}). 
\end{equation*}
\end{definition}
In order to prove Theorem~\ref{thm:learn_prob} we assume the following regularity on our SDEs in~\eqref{eq:sde_app_simp} and~\eqref{eq:sde_app_gen}.
\begin{ass}[Regularity Assumptions]
\label{ass:regularity}
Assume that there exists a jointly measurable\footnote{If not further specified, we consider measurability w.r.t.\@ the corresponding Borel sigma algebras.} function 
\begin{equation*}
    \Upsilon\colon \mathcal{C}([0,T],\R^d)\times D   \times [v,w]^d \times [0,T] \to \R
\end{equation*} 
such that it holds $\P$-a.s.\@ that 
\begin{equation*}
   \Upsilon(B,\Gamma, X, \mathcal{T})= \varphi_\Gamma  (S_\Lambda)
\end{equation*}
and for every $(\gamma, x, t) \in D   \times [v,w]^d \times [0,T]$ it holds $\P$-a.s.\@ that
\begin{equation*}
     \Upsilon(B,\gamma, x, t)= \varphi_\gamma  (S_{\gamma,x,t}),
\end{equation*}
where $B\colon \Omega \to \mathcal{C}([0,T],\R^d)$, $\omega \mapsto (t\mapsto B_t(\omega))$, denotes the mapping to the sample paths of the Brownian motion in~\eqref{eq:bm}.
\end{ass}
Note that the above assumptions are satisfied for the Black-Scholes model in Section~\ref{section_black_scholes} and the heat equations in Section~\ref{sec:heat}. In the former case we can write
\begin{equation*}
    \Upsilon (b,\gamma,x,t)  = \max\{ \gamma_\varphi-xe^{-0.5t\, \gamma_\sigma^2 + \sqrt{t} \, \gamma_\sigma b(1)},0\}
\end{equation*}
and in the latter
\begin{equation*}
     \Upsilon (b,\gamma,x,t)  =  \|x + \sqrt{t} \, \gamma_\sigma b(1)\|^2 \quad \text{(paraboloid)}, \quad \Upsilon (b,\gamma,x,t)  =  e^{-\|x + \sqrt{t} \, \gamma_\sigma b(1)\|^2} \quad \text{(Gaussian)}
\end{equation*}
where $(b,\gamma,x,t) \in \mathcal{C}([0,T],\R^d)\times D   \times [v,w]^d \times [0,T]$. Moreover, the existence of a suitable $\Upsilon$ is in general given for non-parametric Kolmogorov PDEs, see~\cite[Theorem 8.5]{gall2016} and~\cite{beck2018solving}.
First we establish that under our assumptions the minimizer of the statistical learning problem is indeed the parametric Kolmogorov PDE solution map.
\begin{appendixthm}[Learning Problem] 
\label{thm:app_learn_prob}
Let Assumptions~\ref{ass:coeff} and~\ref{ass:regularity} be satisfied. Then it holds that 
\begin{equation*}
    \bar{u} : D   \times [v,w]^d \times [0,T] \rightarrow \mathbb{R}, \quad 
    (\gamma, x, t) \mapsto \bar{u}(\gamma, x, t) := u_\gamma (x,t)
\end{equation*} is the (up to sets of Lebesgue measure zero) unique minimizer of the statistical learning problem
\begin{equation} \label{eq:stat_learn_prob} 
  \operatorname{min}_{f} \mathbbm{E}\Big[ \big(f(\Lambda) -Y\big)^2  \Big]
\end{equation}
where the minimum is taken over all measurable functions $f\colon D   \times [v,w]^d \times [0,T] \to \R$.
\end{appendixthm}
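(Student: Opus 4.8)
The plan is to reduce the minimization to the classical fact that, among all square-integrable functions of the predictor, the conditional expectation is the unique minimizer of the mean squared error, and then to evaluate this conditional expectation via the factorization supplied by Assumption~\ref{ass:regularity} together with the Feynman-Kac formula~\eqref{feynman-Kac}. First I would verify integrability, i.e.\@ $Y\in\mathcal{L}^2(\P)$: the Lipschitz and uniform bounds in Assumption~\ref{ass:coeff}, Items~\ref{it:ass_2} and~\ref{it:ass_3}, yield the standard moment estimates $\sup_{(\gamma,x)\in D\times[v,w]^d}\E[\|S_{\gamma,x,t}\|^p]<\infty$ for every $p\ge 1$ and $t\in[0,T]$, and combined with the polynomial growth of $\varphi_\gamma$ following from Assumption~\ref{ass:coeff}, Items~\ref{it:ass_1} and~\ref{it:ass_3}, this gives $\E[Y^2]<\infty$. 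With integrability established, the bias--variance decomposition
\begin{equation*}
\E\big[(f(\Lambda)-Y)^2\big]=\E\big[(f(\Lambda)-\E[Y\mid\Lambda])^2\big]+\E\big[(\E[Y\mid\Lambda]-Y)^2\big]
\end{equation*}
shows that the second summand does not depend on $f$, so a measurable $f$ minimizes~\eqref{eq:stat_learn_prob} if and only if $f(\Lambda)=\E[Y\mid\Lambda]$ holds $\P$-almost surely. It therefore suffices to identify a version of the regression function $\lambda\mapsto\E[Y\mid\Lambda=\lambda]$ with $\bar{u}$.

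The central step is the computation of this conditional expectation. Here I would exploit that $\Lambda=(\Gamma,X,\mathcal{T})$ is $\F_0$-measurable while $B$ is the sample path of an $(\F_t)$-Brownian motion started at $0$, so that $B$ and $\Lambda$ are independent. By the first part of Assumption~\ref{ass:regularity} we may write $Y=\varphi_\Gamma(S_\Lambda)=\Upsilon(B,\Lambda)$ $\P$-a.s.\@ with $\Upsilon$ jointly measurable, and the independence of $B$ and $\Lambda$ combined with Fubini's theorem then yields the freezing (factorization) identity
\begin{equation*}
\E[Y\mid\Lambda=\lambda]=\E[\Upsilon(B,\Lambda)\mid\Lambda=\lambda]=\E[\Upsilon(B,\lambda)]
\end{equation*}
for $\P_\Lambda$-almost every $\lambda=(\gamma,x,t)$. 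Using the second defining property of $\Upsilon$ in Assumption~\ref{ass:regularity}, namely $\Upsilon(B,\gamma,x,t)=\varphi_\gamma(S_{\gamma,x,t})$ $\P$-a.s., the right-hand side equals $\E[\varphi_\gamma(S_{\gamma,x,t})]$.

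Finally I would invoke the Feynman-Kac formula~\eqref{feynman-Kac}, which under Assumption~\ref{ass:coeff} links the SDE~\eqref{eq:sde_app_simp} to the viscosity solution $u_\gamma$, to conclude $\E[\varphi_\gamma(S_{\gamma,x,t})]=u_\gamma(x,t)=\bar{u}(\gamma,x,t)$ for almost every $(\gamma,x,t)$. Chaining the three steps gives $\E[Y\mid\Lambda]=\bar{u}(\Lambda)$ $\P$-a.s., so $\bar{u}$ is a minimizer, and it is unique up to $\P_\Lambda$-null sets by strict convexity of the quadratic loss. Since $\Lambda$ is uniformly distributed, $\P_\Lambda$ is equivalent to Lebesgue measure on $D\times[v,w]^d\times[0,T]$, which upgrades the uniqueness to "up to sets of Lebesgue measure zero". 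I expect the main obstacle to be the rigorous justification of the freezing identity: one must carefully combine the joint measurability of $\Upsilon$, the independence of $B$ from $\F_0\supseteq\sigma(\Lambda)$, and a disintegration argument. This is precisely the role of the technical Assumption~\ref{ass:regularity} — it isolates the entangled $\Lambda$-dependence of $\varphi_\Gamma(S_\Lambda)$ into a fixed measurable functional of the independent driving noise, so that conditioning on $\Lambda$ reduces to evaluating $\Upsilon$ at a frozen argument.
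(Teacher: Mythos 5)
Your proposal is correct and follows essentially the same route as the paper's proof: establish square-integrability of $Y$ via moment bounds on $S_\Lambda$, reduce the minimization to identifying the regression function $\E[Y\mid\Lambda=\cdot\,]$, compute it via the factorization $Y=\Upsilon(B,\Lambda)$ with $B\perp\Lambda$ and Fubini (the paper phrases this as verifying $\E[\1_{\{\Lambda\in A\}}\varphi_\Gamma(S_\Lambda)]=\int_A\E[\varphi_\gamma(S_{\gamma,x,t})]\,d\P_\Lambda$ for all Borel $A$, which is your freezing identity in disintegrated form), and conclude with the Feynman--Kac formula and the equivalence of $\P_\Lambda$ with Lebesgue measure. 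No substantive differences.
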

\begin{proof}
Note that one can extend standard results on the moments of SDE solution processes (see~\cite[Theorems 4.5.3 and 4.5.4]{KloedenPlaten1992} and~\cite[Chapter 5, Theorem 2.3]{friedman2012}) to prove that $S_{\Lambda}$ and thus also the target variable $Y=\varphi_\Gamma  (S_{\Lambda})$ have bounded moments.
It is well-known that under this condition the (up to sets of measure zero w.r.t.\@ the distribution of $\Lambda$) unique solution of the statistical learning problem~\eqref{eq:stat_learn_prob} is given by the regression function 
\begin{equation} \label{eq:rep_reg_proof}
  f^*(\gamma,x,t) :=  \mathbbm{E}[\, Y \ \vert \ \Lambda = (\gamma,x,t)],  \quad (\gamma,x,t) \in D\times [v,w]^d \times [0,T],
\end{equation}
that is
\begin{equation*}
    f^* = \operatorname{argmin}_{f} \mathbbm{E}\Big[ \big(f(\Lambda) -Y\big)^2  \Big],
\end{equation*}
see, for instance,~\cite{cucker2002mathematical}. Moreover, the Feynman-Kac formula establishes for every $(\gamma, x, t) \in D   \times [v,w]^d \times [0,T]$ that
\begin{equation} \label{eq:rep_sol_proof}
    \mathbb{E}[\varphi_\gamma  (S_{\gamma,x,t})]  
    = u_\gamma (x, t)=\bar{u}(\gamma, x, t),
\end{equation}
see~\cite[Corollary 4.17]{hairer2015loss}.
Finally, Assumptions~\ref{ass:regularity} and the independence of $B$ and $\Lambda$ ensure that for every Borel measurable set $A\subseteq D\times [v,w]^d \times [0,T]$ it holds that
\begin{equation*}
\begin{split}
\E \big[ \1_{\{\Lambda\in A\}} \varphi_\Gamma  (S_{\Lambda} )\big] 
&= \int_A  \int_{\mathcal{C}([0,T],\R^d)}\Upsilon(b,\gamma,x,t) \, d\P_B(b) \,d\P_{(\Gamma,X,\mathcal{T})}(\gamma,x,t) \\
&=\int_A \E \big[ \varphi_\gamma  (S_{\gamma,x,t}) \big] \,d\P_{(\Gamma,X,\mathcal{T})}(\gamma,x,t)
\end{split} 
\end{equation*}
where we denote the distributions of $\Lambda$ and $B$ by $\P_{(\Gamma,X,\mathcal{T})}$ and $\P_{B}$ (Wiener measure), respectively. Together with the fact that $\Lambda$ is uniformly distributed, this proves that for almost every $(\gamma, x, t) \in D   \times [v,w]^d \times [0,T]$ it holds that
\begin{equation*} 
     \mathbbm{E}[\, Y \vert \ \Lambda = (\gamma,x,t)] = \mathbbm{E}[\varphi_\Gamma  (S_{\Lambda} ) \ \vert \ \Lambda = (\gamma,x,t)] = \mathbb{E}[\varphi_\gamma  (S_{\gamma,x,t})],
\end{equation*}
see~\cite[Chapter 4]{pollard2002} and~\cite[Theorem 13.46]{aliprantis2006infinite}. Combined with~\eqref{eq:rep_reg_proof} and~\eqref{eq:rep_sol_proof}, this proves the claim.
\end{proof}
Next, we establish the stability of the statement in Theorem~\ref{thm:app_learn_prob} w.r.t.\@ approximate data generation via the Euler-Maruyama scheme.
\begin{appendixthm}[Approximated Learning Problem]
\label{thm:app_learn_approx}
Let Assumptions~\ref{ass:coeff} and~\ref{ass:regularity} be satisfied and for every $M\in\N$ let 
\begin{equation*}
    \bar{u}^{M}\colon D   \times [v,w]^d \times [0,T] \rightarrow \mathbb{R}
\end{equation*}
be the (up to sets of Lebesgue measure zero) unique solution of the approximated learning problem
\begin{equation*}
    \min_{f} \mathbbm{E}\Big[ \big(f(\Lambda) -Y^M\big)^2  \Big]
\end{equation*}
where the minimum is taken over all measurable functions $f\colon D   \times [v,w]^d \times [0,T] \to \R$.
Then there exists a constant $C>0$ such that for every $M\in\N$ it holds that
\begin{equation*}
        \| \bar{u}^{M} - \bar{u}\|_{\mathcal{L}^\infty(D\times [v,w]^d \times [0,T])} \le \tfrac{C}{\sqrt{M}}.
\end{equation*}
\end{appendixthm}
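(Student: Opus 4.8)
The plan is to mirror the two-step strategy from the proof of Theorem~\ref{thm:app_learn_prob}: first identify the minimizer $\bar{u}^M$ explicitly as a regression function, and then bound its pointwise distance to $\bar{u}$ via the strong convergence of the Euler--Maruyama scheme combined with the local Lipschitz property of the initial condition. First I would establish, exactly as in~\eqref{eq:rep_reg_proof}, that the unique minimizer of the approximated learning problem is the conditional expectation
\begin{equation*}
  \bar{u}^M(\gamma,x,t) = \E[\,Y^M \mid \Lambda = (\gamma,x,t)\,] = \E[\varphi_\gamma(S^{M,M}_{\gamma,x,t})].
\end{equation*}
This needs an analogue of Assumption~\ref{ass:regularity} for the Euler--Maruyama data, which is immediate: by its recursive definition $S^{M,M}_{\gamma,x,t}$ is an explicit measurable function of finitely many Brownian increments and of $(\gamma,x,t)$, so the decoupling argument used for Theorem~\ref{thm:app_learn_prob} applies verbatim and yields the regression representation above. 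The integrability needed there carries over from the corresponding discrete moment estimates for the Euler--Maruyama scheme.

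With both minimizers written as expectations of $\varphi_\gamma$ along the exact and the discretized process, the problem reduces to the pointwise estimate
\begin{equation*}
  |\bar{u}^M(\gamma,x,t)-\bar{u}(\gamma,x,t)| \le \E\big[|\varphi_\gamma(S^{M,M}_{\gamma,x,t})-\varphi_\gamma(S_{\gamma,x,t})|\big].
\end{equation*}
Here I would invoke the local Lipschitz bound in Assumption~\ref{ass:coeff}\ref{it:ass_1} and then Cauchy--Schwarz to split the right-hand side into a strong-error factor and a moment factor,
\begin{equation*}
  \E\big[|\varphi_\gamma(S^{M,M})-\varphi_\gamma(S)|\big]
  \le c\,\E\big[\|S^{M,M}-S\|^2\big]^{1/2}\,\E\big[(1+\|S^{M,M}\|^c+\|S\|^c)^2\big]^{1/2}.
\end{equation*}
The first factor is controlled by the classical strong $\mathcal{L}^2$-convergence rate of the Euler--Maruyama scheme under globally Lipschitz coefficients (Assumption~\ref{ass:coeff}\ref{it:ass_2}--\ref{it:ass_3}), namely $\E[\|S^{M,M}_{\gamma,x,t}-S_{\gamma,x,t}\|^2]^{1/2}\le C M^{-1/2}$, see~\cite{KloedenPlaten1992}; the second factor is bounded uniformly in $M$ by the discrete and continuous moment estimates extending~\cite[Theorems 4.5.3 and 4.5.4]{KloedenPlaten1992} and~\cite{friedman2012}.

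The main obstacle, and the step demanding the most care, is the \emph{uniformity} of the constant $C$ over the entire compact domain $D\times[v,w]^d\times[0,T]$, since this is exactly what upgrades the pointwise bounds to the claimed $\mathcal{L}^\infty$-estimate. I would argue that the Lipschitz and growth constants in Assumption~\ref{ass:coeff} are independent of $\gamma$, that the initial data $x$ range over the compact hypercube $[v,w]^d$, and that the Euler step size $t/M$ is bounded by $T/M$ for every $t\in[0,T]$; consequently the standard constants in both the strong-convergence estimate and the moment bounds depend only on $c$, $d$, $T$, $v$, and $w$, and can therefore be chosen uniform over the domain. A small technical check is that the step size $t/M$ depending on the evaluation time $t$ does not spoil this uniformity, which it does not, precisely because $t\le T$. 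Taking the supremum over $(\gamma,x,t)$ then yields $\|\bar{u}^M-\bar{u}\|_{\mathcal{L}^\infty(D\times[v,w]^d\times[0,T])}\le C M^{-1/2}$, as desired.
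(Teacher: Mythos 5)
Your proposal is correct and follows essentially the same route as the paper's proof: identify $\bar{u}^M$ as the regression function $\E[\varphi_\gamma(S^{M,M}_{\gamma,x,t})]$ via the Euler--Maruyama analogue of Assumption~\ref{ass:regularity}, then combine the local Lipschitz property of $\varphi_\gamma$, Cauchy--Schwarz, the strong $\mathcal{L}^p$-convergence rate $C/\sqrt{M}$, and uniform moment bounds. Your explicit attention to the uniformity of the constants over the parametric, compact domain is a point the paper only asserts implicitly, but it is the same argument.
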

\begin{proof}
Extending results on the Euler-Maruyama scheme (see, e.g.,~\cite[Theorem 10.2.2]{KloedenPlaten1992}) one can prove that also in the parametric case for every $p\ge2$ there exists a constant $C > 0$ such that for every $M\in\N$, $(\gamma,x,t) \in D\times [v,w]^d \times [0,T]$ it holds that
\begin{equation} \label{eq:EM_approx}
   \E\big[\|S^{M,M}_{\gamma,x,t}\|^p\big]\le C \quad \text{and} \quad \big(\E \big[\|  S^{M,M}_{\gamma,x,t} - S_{\gamma,x,t} \|^p \big]\big)^{1/p} \le \tfrac{C}{\sqrt{M}}.
\end{equation}
Similar to the proof of Theorem~\ref{thm:app_learn_prob} one can further establish that for every $M\in\N$ and almost every $(\gamma,x,t) \in D\times [v,w]^d \times [0,T]$ it holds that
\begin{equation*}
    \bar{u}^{M}(\gamma,x,t) =\mathbbm{E}[\, Y^M \vert \ \Lambda = (\gamma,x,t)] = \mathbbm{E}[\varphi_\Gamma  (S^{M,M}_{\Lambda}) \ \vert \ \Lambda = (\gamma,x,t)] = \mathbb{E}[\varphi_\gamma  (S^{M,M}_{\gamma,x,t})]
\end{equation*}
where the existence of functions $\Upsilon^M$ with analogous properties as in Assumptions~\ref{ass:regularity} is guaranteed by the Euler-Maruyama scheme.
The local Lipschitz property of $\varphi_\gamma$ now ensures that for every $M\in\N$ and almost every $(\gamma,x,t) \in D\times [v,w]^d \times [0,T]$ it holds that
\begin{equation}
\begin{split}
| \bar{u}^{M}(\gamma,x,t) - \bar{u}(\gamma,x,t)| &= \big| \E \big[ \varphi_\gamma  (S^{M,M}_{\gamma,x,t}) \big]  - \mathbb{E}[\varphi_\gamma  (S_{\gamma,x,t})] \big| \\
&\le \ c\, \E\big[\|S^{M,M}_{\gamma,x,t} - S_{\gamma,x,t} \| \big(1+\| S^{M,M}_{\gamma,x,t}\|^c + \|S_{\gamma,x,t}\|^c \big)\big]
\end{split}
\end{equation}
which together with the Cauchy-Schwarz inequality and~\eqref{eq:EM_approx} proves the theorem.
\end{proof}
Note that this result can also be used to show that our generalization result in Theorem~\ref{thm:gen} is not compromised by using data simulated by the Euler-Maruyama scheme. 

Now we outline how to prove the simultaneous approximation of the parametric solution map and its partial derivatives by a neural networks without curse of dimensionality, i.e.\@ with the network size scaling only polynomially in the underlying spatial dimension. In mathematical terms, we prove approximation results in the Sobolev norm $\|{\cdot}\|_{W^{1,\infty}}$, see~\cite{Evans2015MeasureEdition}.
As a motivating example, we take the heat equation from Section~\ref{sec:heat} and 
from now on we only consider feed-forward neural networks with ReLU activation function (ReLU networks), see e.g.\@~\cite[Section 2]{petersen2018optimal} for a precise definition.
\begin{appendixthm}[Sobolev Approximation]
\label{thm:app_approx_para}
Let $a\in\R$, $b\in (a, \infty)$, and for every $d\in\N$ let 
\begin{equation*}
    \bar{u}_d(\gamma_{\sigma}  ,x,t) = \|x\|^2 + t \operatorname{Trace}(\gamma_\sigma \gamma_\sigma^{*}), \quad (\gamma_\sigma, x, t) \in [a,b]^{d\times d} \times [v,w]^d \times [0, T],
\end{equation*}
be the parametric solution map for the $d$-dimensional heat equation with paraboloid initial condition. Then there exists a constant $C>0$ with the following property: For every $\eps\in (0,1/2)$, $d\in\N$ there exists a ReLU network $\Phi_{\eps,d}$ with at most $\lfloor C d^4\log(d/\eps)\rfloor$ parameters satisfying that
\begin{equation*}
\|\Phi_{\eps,d} - \bar{u}_d\|_{W^{1,\infty}([a,b]^{d\times d}   \times [v,w]^d \times [0,T])}\le \eps.
\end{equation*}
\end{appendixthm}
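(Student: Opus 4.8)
The plan is to exploit that the target is a low-degree polynomial,
$\bar{u}_d(\gamma_\sigma,x,t) = \sum_{i=1}^d x_i^2 + t\,\trace(\gamma_\sigma\gamma_\sigma^*) = \sum_{i=1}^d x_i^2 + t\sum_{i,j=1}^d (\gamma_\sigma)_{ij}^2$,
so that only two nonlinear operations are needed: squaring a bounded scalar and multiplying two bounded scalars, both of which admit ReLU approximations whose size grows only logarithmically in the reciprocal accuracy. First I would invoke the classical sawtooth/squaring construction (cf.\ the squaring-function architecture referenced in Section~\ref{num_results}): for each $\delta$ there is a ReLU network $\mathrm{Sq}_\delta$ of size $\mathcal{O}(\log(1/\delta))$ with $\|\mathrm{Sq}_\delta - (\cdot)^2\|_{\mathcal{L}^\infty([a,b])}\le\delta$, and from it, via the polarization identity $xy = \tfrac14\big((x+y)^2-(x-y)^2\big)$, a multiplication gadget $\mathrm{Mult}_\delta$ of the same asymptotic size on any fixed box. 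The essential point is that these approximations also converge in $W^{1,\infty}$: the piecewise-linear interpolant of $x^2$ at the dyadic nodes has slope exactly $2x$ at each subinterval midpoint, so its a.e.\ derivative satisfies $\|\mathrm{Sq}_\delta' - 2(\cdot)\|_{\mathcal{L}^\infty}\le\mathcal{O}(\sqrt\delta)$, and the derivative bound for $\mathrm{Mult}_\delta$ then follows from the product rule.

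Next I would assemble the network. I would run $d$ parallel copies of $\mathrm{Sq}$ on the coordinates $x_i$ and $d^2$ parallel copies on the entries $(\gamma_\sigma)_{ij}$, sum the first group affinely to obtain $\approx\|x\|^2$ and the second group to obtain a scalar $P\approx\trace(\gamma_\sigma\gamma_\sigma^*)$, then apply a single $\mathrm{Mult}$ gadget to the pair $(t,P)$ and add the two contributions. Since $\gamma_\sigma\in[a,b]^{d\times d}$, the scalar $P$ ranges over $[0,\mathcal{O}(d^2)]$ and $t\in[0,T]$, so the multiplication acts on a box of diameter $\mathcal{O}(d^2)$; crucially this enters the size of $\mathrm{Mult}$ only through its logarithm, $\log(d^2/\delta)=\mathcal{O}(\log(d/\eps))$. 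Counting the weights of the resulting fully connected architecture, which has width $\mathcal{O}(d^2)$ and depth $\mathcal{O}(\log(d/\eps))$, yields $\mathcal{O}((d^2)^2\log(d/\eps))=\mathcal{O}(d^4\log(d/\eps))$ parameters; I expect $d^4$ to be a safe, non-tight bound, with a sparse block-diagonal layout achieving fewer.

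The error analysis fixes the required per-block depth. For the value, the triangle inequality gives an error of at most $\mathcal{O}(d+d^2)$ times the per-square error plus the multiplication error. For the gradient, which is the binding constraint, I would differentiate a.e.\ through the chain and product rules: $\partial_{x_i}\Phi\approx\mathrm{Sq}'(x_i)\approx 2x_i$, $\partial_{(\gamma_\sigma)_{ij}}\Phi\approx \partial_P\mathrm{Mult}(t,P)\cdot\mathrm{Sq}'((\gamma_\sigma)_{ij})\approx 2t(\gamma_\sigma)_{ij}$, and $\partial_t\Phi\approx\partial_t\mathrm{Mult}(t,P)\approx P$. Bounding $\|\nabla\Phi-\nabla\bar u_d\|$ in the Euclidean norm over all $d^2+d+1$ coordinates forces a per-coordinate derivative accuracy of order $\eps/d$, so that the aggregated $\ell^2$ error over the $\mathcal{O}(d^2)$ many $\gamma$-coordinates stays below $\eps$; by the midpoint-slope estimate this is achieved with $\mathcal{O}(\log(d/\eps))$ layers in each squaring and multiplication block. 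Since every block costs only $\mathcal{O}(\log(d/\eps))$ parameters irrespective of the polynomial denominator in the accuracy, the logarithmic-in-$\eps$, polynomial-in-$d$ size is preserved.

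The main obstacle I anticipate is precisely the \emph{Sobolev} ($W^{1,\infty}$) control rather than the $\mathcal{L}^\infty$ approximation, because ReLU networks are only piecewise affine and their gradients are piecewise constant, so one must approximate the continuous, affine gradient $\nabla\bar u_d$ in the essential-supremum sense. The two delicate points are (i) verifying that the squaring gadget converges in derivative, which rests on the midpoint-slope identity above, and (ii) tracking how the per-block derivative errors aggregate through the product rule across all $d^2+d+1$ coordinates while keeping the dependence on $d$ polynomial and on $\eps$ logarithmic; in particular one must control the $\mathcal{O}(d^2)$-sized Lipschitz constant of the multiplication on the $d$-dependent domain and confirm that it costs only logarithmically in depth.
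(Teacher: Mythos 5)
Your proof is correct and follows essentially the same route as the paper's: a Yarotsky-type squaring network with $W^{1,\infty}$ control, multiplication via the polarization identity, parallelization over the $d^2+d$ coordinates, and error propagation through a chain rule for ReLU networks, yielding the same $\mathcal{O}(d^4\log(d/\eps))$ parameter count. The only (harmless) structural difference is that the paper realizes the representation $\sum_{i}\operatorname{sq}(x_i)+\sum_{i,j}\operatorname{mult}\big(t,\operatorname{sq}((\gamma_\sigma)_{ij})\big)$ with $d^2$ multiplication gadgets each acting on an $\mathcal{O}(1)$ box, whereas you sum the approximate squares first and apply a single multiplication on a box of diameter $\mathcal{O}(d^2)$, which, as you correctly note, costs only an additional logarithmic factor in depth.
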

\begin{proof}
Our result is based on ReLU network approximation results in~\cite[Propositions C.1 and C.2]{guhring2019error} and~\cite[Propositions III.2 and III.4]{Grohs2019DeepTheory}, which are extensions of the work by Yarotsky~\cite{yarotsky2017error}. Specifically, let $\Delta> 0$ and let $\operatorname{sq}\colon[-\Delta,\Delta] \to \R$ be the squaring function given by $\operatorname{sq}(x):=x^2$. Then there exists a ReLU network $\Phi_\eps^{sq}$ with $\mathcal{O}(\log(1/\eps))$ layers, $\mathcal{O}(1)$ neurons per layer, and parameters bounded by $\mathcal{O}(1)$ satisfying that
\begin{equation*}
\|\Phi_\eps^{sq} - \operatorname{sq} \|_{W^{1,\infty}([-\Delta,\Delta])} \le \eps.
\end{equation*}
By the polarization identity $xy=\tfrac{1}{2}((x+y)^2-x^2-y^2)$
an analogous result holds for the multiplication function $\operatorname{mult}\colon[-\Delta,\Delta]^2 \to \R$ given by $\operatorname{mult}(x,y) := xy$.
We can therefore imitate the representation
\begin{equation*}
\bar{u}_d(\gamma_{\sigma}  ,x,t) 
  = \sum_{i=1}^d \operatorname{sq}(x_i) +  \sum_{i,j=1}^{d} \operatorname{mult}\big(t, \operatorname{sq}((\gamma_\sigma)_{ij})\big)
\end{equation*}
using ReLU network concatenation and parallelization~\cite[Section 5]{elbrachter2018dnn}. Finally, we can estimate the error using a chain rule for ReLU networks, see~\cite{berner2019towards} and~\cite[Section B.1]    {guhring2019error}.
\end{proof}
Next, we show that our setting even allows for combined approximation and generalization results without curse of dimensionality. To prove this, we focus on the d-dimensional heat equation with varying diffusivity and Gaussian initial condition.
We first show that ReLU networks are capable of efficiently approximating the parametric solution map.
\begin{appendixthm}[Approximation] 
\label{thm:app_approx_gauss}
Let $a\in\R$, $b\in (a, \infty)$ and for every $d\in\N$ let 
\begin{equation}
\label{eq:heat_gauss_sol}
    \bar{u}_d(\gamma_{\sigma}  ,x,t) = \frac{1}{(1 + 2t\gamma_{\sigma}^2)^{d/2}}e^{-\tfrac{\|x\|^2}{1 + 2t\gamma_{\sigma}^2}}, \quad (\gamma_\sigma, x, t) \in [a,b] \times [v,w]^d \times [0, T],
\end{equation}
be the parametric solution map of the $d$-dimensional heat equation with Gaussian initial condition. Then there exist a constant $C>0$ and a polynomial $q\colon\R\to\R$ with the following property: For every $\eps\in(0,1/2)$, $d\in\N$ there exists a ReLU network $\Phi_{\eps,d}$ with at most $\lfloor q(\log(d/\eps) )\rfloor$ layers, at most $\lfloor Cd \rfloor$ neurons per layer, and parameters bounded by $C$ satisfying that
\begin{equation*}
\|\Phi_{\eps,d} - \bar{u}_d\|_{\mathcal{L}^{\infty}([a,b]   \times [v,w]^d \times [0,T])}\le \eps.
\end{equation*}
\end{appendixthm}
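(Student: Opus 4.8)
The plan is to exploit the product structure of $\bar{u}_d$ so as to replace the two quantities that scale with $d$ — the power $(1+2t\gamma_\sigma^2)^{-d/2}$ and the exponential of an argument ranging over an interval of length $\mathcal{O}(d)$ — by $d$ operations that each act on a \emph{bounded}, $d$-independent domain and can therefore be emulated by small ReLU blocks with weights bounded independently of $d$. First I would set $r := 1+2t\gamma_\sigma^2$ and note that $r\in[1,R]$ with $R:=1+2T\max\{a^2,b^2\}$ independent of $d$, and factorize
\[
\bar{u}_d(\gamma_\sigma,x,t)=\prod_{i=1}^d f_i, \qquad f_i := r^{-1/2}\,e^{-x_i^2/r}.
\]
The point of this form is that every intermediate quantity lives in a bounded set whose size does not grow with $d$: writing $P:=\max\{v^2,w^2\}$, one has $\gamma_\sigma^2,\ t\gamma_\sigma^2$ bounded, $r\in[1,R]$, $r^{-1/2}\in[R^{-1/2},1]$, $1/r\in[1/R,1]$, $x_i^2\in[0,P]$, $x_i^2/r\in[0,P]$, $e^{-x_i^2/r}\in[e^{-P},1]$, and each $f_i\in(0,1]$.

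Next I would assemble the elementary building blocks. Besides the ReLU approximations of $\operatorname{sq}$ and $\operatorname{mult}$ used in the proof of Theorem~\ref{thm:app_approx_para}, I need: the map $y\mapsto e^{-y}$ on the compact interval $[0,P]$; and the maps $r\mapsto 1/r$ and $r\mapsto r^{-1/2}$ on $[1,R]$, both smooth and bounded away from their singularity. Each of these is a smooth function on a fixed compact interval and is therefore approximable in $\mathcal{L}^\infty$ to accuracy $\eta$ by a ReLU network with $\mathcal{O}(\operatorname{polylog}(1/\eta))$ layers, $\mathcal{O}(1)$ neurons per layer, and weights bounded by $\mathcal{O}(1)$, via the Yarotsky-type constructions of~\cite{yarotsky2017error, guhring2019error, Grohs2019DeepTheory} (explicitly, $e^{-y}$ can also be built by $\mathcal{O}(\log(P/\eta))$ repeated squarings of $1-y/2^k$, and $r^{-1/2}=\exp(-\tfrac12\log r)$). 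I would additionally clip each approximate factor $\tilde f_i$ and every partial product into $[0,1]$ by $\operatorname{ReLU}$-realizable truncations, which keeps all factors and intermediate products in $[0,1]$ and is what makes the product error controllable.

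For the assembly I would compute $r$, then $r^{-1/2}$ and $1/r$ once, replicate them $d$ times via weight-one edges, and compute the $d$ factors $\tilde f_i$ in parallel, yielding width $\mathcal{O}(d)$ and depth $\mathcal{O}(\operatorname{polylog}(1/\eta))$. I would then multiply the $\tilde f_i$ in a balanced binary tree of $d-1$ multiplication blocks ($\mathcal{O}(\log d)$ levels), synchronizing depths with identity sub-networks and composing everything through the concatenation and parallelization calculus of~\cite{elbrachter2018dnn}. This gives depth $\mathcal{O}(\log d)\cdot\mathcal{O}(\operatorname{polylog}(1/\eta))$, width $\mathcal{O}(d)$, and weights $\mathcal{O}(1)$. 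For the error, since all factors and partial products lie in $[0,1]$, a telescoping/hybrid estimate bounds the total error of the product by $\mathcal{O}(d)$ times the per-block accuracy (accounting for both the $d$ factor errors and the $d-1$ tree-multiplication errors); choosing $\eta\sim\eps/d$ therefore yields $\mathcal{L}^\infty$ error $\le\eps$ at only a $\log(d/\eps)$ cost. Tallying sizes gives $\lfloor q(\log(d/\eps))\rfloor$ layers for a suitable polynomial $q$, at most $\lfloor Cd\rfloor$ neurons per layer, and weights bounded by $C$.

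The main obstacle — and the reason for insisting on the product form rather than the seemingly simpler log-sum-exp representation $\bar{u}_d=\exp\!\big(-\tfrac{d}{2}\log r-\|x\|^2/r\big)$ — is the requirement that the weights be bounded by a constant \emph{independent of $d$}. The log-sum-exp form would force either a weight of order $d$ (to realize $\tfrac{d}{2}\log r$) or rescalings of order $d$ (since $\|x\|^2/r$ and hence the exponent range over $[-\mathcal{O}(d),0]$, and the squaring/multiplication blocks on such a domain require back-scalings of order $d^2$), both incompatible with $\mathcal{O}(1)$ weights. The product form converts these $d$-scaling operations into $d$ bounded-domain operations combined multiplicatively, at the price of $\mathcal{O}(\log d)$ extra depth and $\mathcal{O}(d)$ width. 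The remaining delicate point is the bookkeeping verifying that the $\mathcal{O}(d\eta)$ accumulation of errors across the factors and the tree multiplications inflates the required per-block accuracy only polylogarithmically, so that the stated depth bound $q(\log(d/\eps))$ is preserved.
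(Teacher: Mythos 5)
Your proposal is correct and follows essentially the same route as the paper's proof: the same factorization $\bar{u}_d=\prod_{i=1}^d r^{-1/2}e^{-x_i^2/r}$ into $d$ factors built from bounded-domain elementary blocks (the paper writes each factor as $\operatorname{mult}(g(\operatorname{mult}(x_i,f)),f)$ with $f=(1+2t\gamma_\sigma^2)^{-1/2}$ and $g(y)=e^{-y^2}$, which is the same quantity), parallelization of the $d$ factors, hierarchical pairwise multiplication for the product, and an error accumulation linear in $d$ absorbed by taking per-block accuracy $\eta\sim\eps/d$. The only cosmetic differences are that you approximate $e^{-y}$ and $1/r$ separately where the paper uses $e^{-y^2}$ composed with $\operatorname{mult}(x_i,r^{-1/2})$, and you control the product error by telescoping with clipping where the paper invokes the mean value theorem.
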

\begin{proof}
  The proof is based on combining ReLU approximation results for Chebyshev polynomials (see~\cite[Lemma  A.6]{Grohs2019DeepTheory}) and the squaring and multiplication functions $\operatorname{sq}$, $\operatorname{mult}$ (see the proof of Theorem~\ref{thm:app_approx_para}). Specifically, for given $\Delta> 0$ we can approximate the functions
  \begin{equation*}
      [0,\Delta] \ni x \mapsto h(x) :=\sqrt{ \tfrac{1}{1+2x}} \quad \text{and} \quad [0, \Delta] \ni x\mapsto g(x):= e^{-x^2}
  \end{equation*}
   up to precision $\eps$ by ReLU networks with $\mathcal{O}(\operatorname{polylog}(1/\eps))$ layers, $\mathcal{O}(1)$ neurons per layer, and parameters bounded by $\mathcal{O}(1)$.
   Moreover, observe that
\begin{equation*}
    \bar{u}_d(\gamma_{\sigma}  ,x,t) =\prod_{i=1}^d \operatorname{mult}\Big(g\big( \operatorname{mult}(x_i,f(t,\gamma_{\sigma}))\big),  f(t,\gamma_{\sigma})\Big)
\end{equation*}
where
\begin{equation*}
f(t,\gamma_{\sigma}) := h(\operatorname{mult}(t,\operatorname{sq}(\gamma_{\sigma})))= \sqrt{\tfrac{1}{1 + 2t\gamma_{\sigma}^2}}.
\end{equation*}
We can imitate this representation using ReLU network concatenation and parallelization and hierarchical, pairwise multiplications for the tensor product, see~\cite[Section 5 and Proposition 6.4]{elbrachter2018dnn}. Finally, we can estimate the error via the mean value theorem.
\end{proof}
Now we show that the number of samples $s$ in~\eqref{eq:emp_learn}, needed to learn the parametric solution map $\bar{u}$, does not suffer from the curse of dimensionality, either. To satisfy boundedness assumptions commonly used in statistical learning theory, we restrict ourself to clipped ReLU networks, the output of which is assumed to be bounded by $1$. This can be achieved by composing each ReLU network with a simple clipping function, which itself can be represented as a small ReLU network~\cite[Section A.4]{berner2018analysis}. Note that this incorporates our prior knowledge that the parametric solution map of the heat equation with Gaussian initial condition in~\eqref{eq:heat_gauss_sol} satisfies $\|\bar{u}_d\|_{\mathcal{L}^{\infty}([a,b]   \times [v,w]^d \times [0,T])} \le 1$.
\begin{appendixthm}[Generalization]
\label{thm:app_gen}
Let $a\in\R$, $b\in (a, \infty)$ and for every $d\in\N$ let
\begin{equation*}
    V_d:=\operatorname{vol}([a,b]   \times [v,w]^d \times [0,T]) = T(b-a)(w-v)^d,
\end{equation*}
let $\bar{u}_d \colon [a,b] \times [v,w]^d \times [0, T] \mapsto [0,1]$ be the parametric solution map of the $d$-dimensional heat equation with Gaussian initial condition as defined in~\eqref{eq:heat_gauss_sol}, let
\begin{equation*}
    \Lambda_{d} = (\Gamma_d, X_d, \mathcal{T}_d) \sim \mathcal{U}([a,b]   \times [v,w]^d \times [0,T])\quad \text{and} \quad N_d \sim\mathcal{N}(0,I_d)
\end{equation*}
be independent random variables, define $ Y_{d} = e^{-\|X_d +  \sqrt{\mathcal{T}_d} \, \Gamma_d N_d \|^2}$, and let $((\Lambda_{d,i}, Y_{d,i}))_{i\in\N}$ be i.i.d.\@ random variables with $(\Lambda_{d,1}, Y_{d,1}) \sim (\Lambda_d, Y_d)$.
Then there exist a constant $C>0$ and a polynomial $q\colon\R\to\R$ with the following property:
For every $\eps, \rho \in (0,1/2)$, $d,s\in\N$ with 
\begin{equation*}
    s \ge (d/\eps)^2 q(\log(d/\eps)) \log(1/\rho)
\end{equation*}
there exists a neural network architecture $\mathcal{A}_{\eps,d}$ with at most $\lfloor q(\log(d/\eps))\rfloor$ layers and at most $\lfloor Cd \rfloor$ neurons per layer such that every measurable empirical risk minimizer
\begin{equation*}
   \hat{\Phi}_{\eps,d,s}\colon\Omega \to \mathcal{H}_{\eps,d}, \quad \hat{\Phi}_{\eps,d,s}(\omega) \in \argmin_{\Phi\in\mathcal{H}_{\eps,d}} \tfrac{1}{s}\sum_{i=1}^s(\Phi(\Lambda_{d,i}(\omega)) - Y_{d,i}(\omega))^2, \quad \omega \in\Omega,
\end{equation*}
in a hypothesis space $\mathcal{H}_{\eps,d}$ of clipped ReLU networks with architecture $\mathcal{A}_{\eps,d}$ and parameters bounded by $C$ satisfies that
\begin{equation*}
    \P\Big[\tfrac{1}{V_d}\|\hat{\Phi}_{\eps,d,s} - \bar{u}_d\|^2_{\mathcal{L}^2([a,b]   \times [v,w]^d \times [0,T])} \le \eps \Big] \ge 1-\rho.
\end{equation*}
\end{appendixthm}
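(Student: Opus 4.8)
The plan is to combine the approximation result of Theorem~\ref{thm:app_approx_gauss} with a uniform generalization bound for empirical risk minimization over the bounded–parameter clipped network class, via the standard approximation–estimation error decomposition. Writing $\mathcal{R}(f) := \E[(f(\Lambda_d) - Y_d)^2]$ and $\hat{\mathcal{R}}_s(f) := \tfrac{1}{s}\sum_{i=1}^s (f(\Lambda_{d,i}) - Y_{d,i})^2$ for the risk and the empirical risk, I would first record that, by the argument in the proof of Theorem~\ref{thm:app_learn_prob}, $\bar{u}_d$ is the regression function $\bar{u}_d(\gamma,x,t) = \E[Y_d \mid \Lambda_d = (\gamma,x,t)]$, so that the excess risk coincides with the squared distance to $\bar u_d$ measured in $\mathcal{L}^2(\P_{\Lambda_d})$:
\[
\mathcal{R}(f) - \mathcal{R}(\bar{u}_d) = \|f - \bar{u}_d\|^2_{\mathcal{L}^2(\P_{\Lambda_d})} = \tfrac{1}{V_d}\|f - \bar{u}_d\|^2_{\mathcal{L}^2([a,b]\times[v,w]^d\times[0,T])}.
\]
It therefore suffices to bound the excess risk of $\hat{\Phi}_{\eps,d,s}$ by $\eps$ with probability at least $1-\rho$.

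For any comparison network $\Phi^*\in\mathcal{H}_{\eps,d}$, the defining property of the minimizer yields the decomposition
\[
\mathcal{R}(\hat{\Phi}_{\eps,d,s}) - \mathcal{R}(\bar{u}_d) \le \big(\mathcal{R}(\Phi^*) - \mathcal{R}(\bar{u}_d)\big) + 2\sup_{\Phi\in\mathcal{H}_{\eps,d}} \big|\mathcal{R}(\Phi) - \hat{\mathcal{R}}_s(\Phi)\big|.
\]
The first term is the approximation error. Since $\bar{u}_d$ takes values in $[0,1]$, clipping the network from Theorem~\ref{thm:app_approx_gauss} to $[0,1]$ preserves accuracy (clipping is a contraction onto a convex set containing the target), so choosing $\Phi^*$ to approximate $\bar{u}_d$ in $\mathcal{L}^\infty$ to precision $\sqrt{\eps/2}$ gives $\mathcal{R}(\Phi^*) - \mathcal{R}(\bar{u}_d) = \|\Phi^*-\bar{u}_d\|^2_{\mathcal{L}^2(\P_{\Lambda_d})} \le \eps/2$. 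This fixes the architecture $\mathcal{A}_{\eps,d}$ with $\mathcal{O}(\operatorname{polylog}(d/\eps))$ layers, $\mathcal{O}(d)$ neurons per layer, and hence $P = \mathcal{O}(d^2 \operatorname{polylog}(d/\eps))$ parameters bounded by $C$.

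It remains to control the uniform deviation term, where I would exploit that both $Y_d = e^{-\|\cdot\|^2}$ and every clipped network take values in $[0,1]$, so the loss is bounded and Lipschitz in the function value on the relevant range. Combined with the Lipschitz dependence of clipped ReLU networks on their parameters over the compact domain $[-C,C]^P$, the hypothesis class admits a sup-norm $\eta$-cover of cardinality $\mathcal{N}(\eta) \le (C'/\eta)^P$, where $\log C'$ is at most polylogarithmic in $d/\eps$ (the parameter-Lipschitz constant grows like $C^{\text{depth}}$, whose logarithm stays polylog since the depth does). A union bound of Hoeffding's inequality over the cover, together with the $\mathcal{O}(\eta)$ error incurred by passing from the cover to the full class, then gives for $\delta = \eps/4$ that
\[
\P\Big[\sup_{\Phi\in\mathcal{H}_{\eps,d}} \big|\mathcal{R}(\Phi) - \hat{\mathcal{R}}_s(\Phi)\big| > \delta\Big] \le 2\,\mathcal{N}(\delta)\exp(-c\,s\,\delta^2) \le \rho,
\]
provided $s \gtrsim \delta^{-2}\big(\log\mathcal{N}(\delta) + \log(1/\rho)\big)$. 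Since $2\delta \le \eps/2$, the two displayed estimates combine to give excess risk $\le \eps$ on the event above, and since $\log\mathcal{N}(\delta) = \mathcal{O}(P\log(1/\delta) + P\operatorname{polylog}(d/\eps)) = \mathcal{O}(d^2\operatorname{polylog}(d/\eps))$, the requirement becomes $s \gtrsim (d/\eps)^2\operatorname{polylog}(d/\eps)\log(1/\rho)$, exactly the claimed sample complexity.

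The principal difficulty lies in the uniform generalization bound: obtaining a covering-number (equivalently Rademacher or VC-type) estimate for the clipped ReLU class whose logarithm scales only polynomially in $d$ and polylogarithmically in $1/\eps$, and verifying that the parameter-Lipschitz constant contributes only a polylog factor despite growing with depth. This is precisely where the curse of dimensionality would reappear if the complexity were mishandled; keeping $\log\mathcal{N}$ at $\mathcal{O}(d^2\operatorname{polylog})$ is what dictates width $\mathcal{O}(d)$ and polylogarithmic depth and makes the approximation and estimation halves fit together. The underlying generalization machinery can be imported from the framework of~\cite{berner2018analysis}, so the genuinely new work is matching its hypotheses to the architecture delivered by Theorem~\ref{thm:app_approx_gauss}.
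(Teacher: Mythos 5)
Your proposal is correct and follows essentially the same route as the paper's proof: the bias--variance (approximation/estimation) decomposition with the regression-function identity from Theorem~\ref{thm:app_learn_prob}, the architecture and $\mathcal{L}^\infty$-approximation error imported from Theorem~\ref{thm:app_approx_gauss}, and a uniform deviation bound obtained from sup-norm covering numbers of the clipped, bounded-parameter ReLU class (as in~\cite{berner2018analysis}) combined with the Lipschitz continuity of the (empirical) risk, Hoeffding's inequality, and a union bound, yielding exactly the claimed $s\sim(d/\eps)^2\operatorname{polylog}(d/\eps)\log(1/\rho)$ sample complexity.
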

\begin{proof}
Let $\mathcal{A}_{\eps,d}$ be the architecture of the ReLU network $\Phi_{\eps/2,d}$ in Theorem~\ref{thm:app_approx_gauss}. To simplify notation, we define $\|{\cdot}\|_{\mathcal{L}^2}:=\|{\cdot}\|_{\mathcal{L}^2([a,b]   \times [v,w]^d \times [0,T])}$ and for every $\Phi\in\mathcal{H}_{\eps,d}$ we define its risk $\mathcal{R}(\Phi)$ and its empirical risk $\hat{\mathcal{R}}(\Phi)$ by
\begin{equation*}
    \mathcal{R}(\Phi) := \E\Big[\big(\Phi(\Lambda_d) - Y_d\big)^2 \Big] \quad \text{and} \quad \hat{\mathcal{R}}(\Phi) := \tfrac{1}{s}\sum_{i=1}^s(\Phi(\Lambda_{d,i}) - Y_{d,i})^2.
\end{equation*}
The fact that the regression function coincides with the parametric solution map (see Theorem~\ref{thm:app_learn_prob}) and the bias-variance decomposition (see~\cite[Lemma 2.2]{berner2018analysis} and~\cite{cucker2002mathematical}) imply that
\begin{equation*}
 \tfrac{1}{V_d}\|\hat{\Phi}_{\eps,d,s} - \bar{u}_d\|^2_{\mathcal{L}^2} = \underbrace{\vphantom{\tfrac{1}{V_d}}\mathcal{R}(\hat{\Phi}_{\eps,d,s})-\mathcal{R}(\Phi^*)}_{\textnormal{generalization error}} +
\underbrace{ \tfrac{1}{V_d}\|\Phi^*-\bar{u}_d\|^2_{\mathcal{L}^2}}_{\textnormal{approximation error}}
\end{equation*} 
where $\Phi^* \in \argmin_{\Phi\in\mathcal{H}_{\eps,d}}\|\Phi-\bar{u}_d\|_{\mathcal{L}^2}$ is a best approximation of $\bar{u}_d$ in $\mathcal{H}_{\eps,d}$.
Our choice of $\mathcal{A}_{\eps,d}$ and Theorem~\ref{thm:app_approx_gauss} ensure that
\begin{equation*}
   \tfrac{1}{V_d}\|\Phi^*-\bar{u}_d\|^2_{\mathcal{L}^2} \le \|\Phi^* - \bar{u}_d\|^2_{\mathcal{L}^{\infty}([a,b]   \times [v,w]^d \times [0,T])}\le \eps/2.
\end{equation*}
For the generalization error we make use of results on the covering numbers of neural network hypothesis spaces, see e.g.\@~\cite[Proposition 2.8]{berner2018analysis}.
They ensure the existence of clipped ReLU networks $(\Phi_i)_{i=1}^n\subset \mathcal{H}_{\eps,d}$ with
\begin{equation}
\label{eq:cov}
    \log(n)\in \mathcal{O} (d^2\operatorname{polylog}(d/\eps))
\end{equation}
such that balls of radius $\eps/64$
(w.r.t.\@ the uniform norm) around those functions cover $\mathcal{H}_{\eps,d}$.
Further, note that the boundedness of the target variable, i.e.\@ $\sup_{\omega\in\Omega}|Y_d(\omega)|\le 1$, and the boundedness of the clipped ReLU networks in our hypothesis space, i.e.\@ $\sup_{\Phi\in\mathcal{H}_{\eps,d}}\|\Phi\|_{\mathcal{L}^{\infty}([a,b]   \times [v,w]^d \times [0,T])}\le 1$, ensure that the (empirical) risk is (uniformly) Lipschitz continuous with
\begin{equation*}
    \operatorname{Lip}(\mathcal{R}) \le 4 \quad \text{and} \quad  \operatorname{Lip}(\hat{\mathcal{R}})\le 4,
\end{equation*}
see~\cite[Proof of Theorem 2.4]{berner2018analysis}.
Thus we can bound the generalization error by 
\begin{equation*}
\begin{split}
   \mathcal{R}(\hat{\Phi}_{\eps,d,s})-\mathcal{R}(\Phi^*) &\le \mathcal{R}(\hat{\Phi}_{\eps,d,s})-\hat{\mathcal{R}}(\hat{\Phi}_{\eps,d,s})+\hat{\mathcal{R}}(\Phi^*)-\mathcal{R}(\Phi^*) \\
   &\le 2\max_{i=1}^n \big|\mathcal{R}(\Phi_i)-\hat{\mathcal{R}}(\Phi_i) \big| + \tfrac{ 2\eps(\operatorname{Lip}(\mathcal{R}) +\operatorname{Lip}(\hat{\mathcal{R}}))}{64} \\
   & \le 2\max_{i=1}^n \big|\mathcal{R}(\Phi_i)-\hat{\mathcal{R}}(\Phi_i) \big| + \eps/4.
\end{split}
\end{equation*}
Employing Hoeffding's inequality~\cite{hoeffding1963} and a union bound, it holds that
\begin{equation*}
   \P\big[\max_{i=1}^n \big|\mathcal{R}(\Phi_i)-\hat{\mathcal{R}}(\Phi_i) \big| \le \eps/8 \big]   \ge 1-\rho
\end{equation*}
where we need $s\sim \log(n/\rho)/\varepsilon^2$ many samples.
Together with~\eqref{eq:cov} this implies the claim.
\end{proof}
 
\subsection{Implementation Details} \label{sec:impl}

First, we want to present a rigorous definition of our Multilevel network architecture.
\begin{definition}[Multilevel Architecture] \label{def:multi_lvl}
Let $L,q,p\in\mathbb{N}$, $\chi\in\{ 0,1\}$, and $\varrho\colon\mathbb{R}\to\mathbb{R}$. We define the Multilevel network $\Phi\colon \R^p\to\R$ with input dimension $\text{dim}_{\text{in}}(\Phi)=p$, $L$ levels, amplifying factor $q$, (component-wise applied) activation function $\varrho$, and residual constant $\chi$ for every $x\in\mathbb{R}^p$ by
\begin{equation} \label{eq:def_multi}
    \Phi(x):=\sum_{l=0}^{L-1} \Phi^{2^l}_l(x) \in\mathbb{R}
\end{equation}
where the intermediate network outputs $\Phi^{i}_l(x)$ are given by the following scheme:
\begin{align*}
    \Phi_l^1(x) &= \mathcal{A}_l^1(\varrho(\operatorname{Norm}_l^1(
    \mathcal{A}_l^{0}(x))), &l\in\{0,\dots,L-1\}, \\
    \Phi_l^{i}(x) &= \mathcal{A}_l^{i}(\varrho\operatorname{Norm}^i_l(\Phi_l^{i-1}(x) + \chi\Phi_{l+1}^{2i-2}(x))), &l\in\{1,\dots,L-2\}, \ i\in\{2,\dots,2^{l}\}, \\
    \Phi_{L-1}^i(x) &= \mathcal{A}_{L-1}^i(\varrho(\operatorname{Norm}_{L-1}^i(
    \Phi_{L-1}^{i-1}(x))), &i\in\{2,\dots,2^{L-1}\}.
\end{align*}
In the above, the constant $\chi$ controls whether we use intermediate residual connections, and for every $l\in\{0,\dots,L-1\}$ the functions 
\begin{equation*}
    \operatorname{Norm}^i_l\colon \mathbb{R}^{qp} \to \mathbb{R}^{qp}, \quad i\in \{ 1, \dots, 2^l \}, 
\end{equation*}
are denoting normalization layers, e.g.\@ batch normalization~\cite{ioffe2015batch} or layer normalization~\cite{ba2016layer}, and
\begin{equation*}
    \mathcal{A}_l^{0}\colon\mathbb{R}^p\to\mathbb{R}^{qp}, \quad
    \mathcal{A}^i_l\colon\mathbb{R}^{qp}\to\mathbb{R}^{qp}, \quad i\in \{ 1, \dots, 2^l-1 \}, \quad
    \mathcal{A}_l^{2^l}\colon\mathbb{R}^{qp}\to\mathbb{R}
\end{equation*} 
are learnable linear mappings (or affine-linear in case of $\mathcal{A}_l^{2^l}$).
\end{definition}
In the implementation of our examples we used $\chi=1$ to propagate intermediate residuals from the corresponding higher level using additive skip-connections, followed by a batch normalization layer as proposed by~\cite{ioffe2015batch}. This allows the length of the shortest gradient path during backpropagation to scale like the number of levels $L$ instead of the number of layers $2^L$; a feature commonly known to prevent diminishing or exploding gradients~\cite{yu2018deep}. Thus, we can maintain computational tractability while at the same time having rather deep architectures. Note that a certain depth is needed for our approximation and generalization results in Section~\ref{sec:theory}, as well as to optimally approximate certain families of functions \cite{montufar2014number, petersen2018optimal, yarotsky2017error}. We pick the ReLU activation function as non-linearity to remain consistent with our theoretical guarantees in Section~\ref{sec:theory} and with the growing body of literature on the approximation and generalization capabilities of ReLU networks. To optimize the networks we use the Adam optimizer (with decoupled weight decay regularization as proposed by~\cite{loshchilov2017decoupled}) and exponentially decaying learning rate. The precise setup is summarized in Table~\ref{table:hp} and the hyperparameters over which we optimized using Tune~\cite{li2018massively, liaw2018tune} are given in Table~\ref{table:opt}.
\newpage

\begin{enumerate}
\item \textbf{Input sets:} input sets for the parameter $\gamma = (\gamma_{\sigma}, \gamma_{\mu}, \gamma_{\varphi}) \in D_{\sigma} \times D_{\mu} \times D_{\varphi} = D$, the spatial variable $x\in [v,w]$, and the time variable $t\in [0,T]$, as defined in Section~\ref{section_affine_linear}.
    \item \textbf{Network:}
        input dimension $\text{dim}_{\text{in}}(\Phi)$,
        activation function $\varrho$,
        number of levels $L$, amplifying factor $q$, usage of intermediate residual connections $\chi$, normalization layers $\operatorname{Norm}_l^i$, and approximate number of parameters of the Multilevel architecture, see Definition~\ref{def:multi_lvl}.
     \item \textbf{Training:}
        computation of the SDE solution,
        optimizer,
        initialization of the linear mappings $\mathcal{A}_l^i$ where $\xi:= d_{\text{in}}^{-1/2}$ with $d_{\text{in}}$ denoting the input dimension,
         weight decay,
         batch-size,
        initial learning rate, and factor for learning rate decay each patience steps as long as the learning rate is larger than the minimal learning rate. Note that the training data size in~\eqref{eq:emp_learn} is given by $s=\text{batch-size} \cdot \text{\#steps}$ where the number of steps is reported in Tables~\ref{table:bs},~\ref{table:basket},~\ref{table:heat}, and~\ref{table:heat_gaussian}.
     \item \textbf{Validation:}
        pointwise computation of the PDE solution,
        batch-size, and
         number of batches per evaluation.\footnote{The evaluation of the PDE via Monte Carlo simulation as in~\eqref{eq:MC} is computationally very expensive. That is the reason why we only took one evaluation batch per iteration for the Basket put option. However, note that training the network with Euler-Maruyama simulated data does not increase the training time significantly (see Table~\ref{table:basket}) which underlines the general applicability of our algorithm.} Note that $n=\text{batch-size} \cdot \text{\#eval. batches}$ for each reported $\mathcal{L}^1 $-error, see~\eqref{eq:lp_err}.
     \item \textbf{Execution:}
        PyTorch module and random module seeds for the $4$ independent runs and
        number and type of GPUs per run.

\end{enumerate}
\begin{table}[t!]
    \centering
    \caption{Training setup}
    \begin{tabular}{lllll}
\toprule
  & \textbf{Black-Scholes} &  \textbf{Basket Put} & \textbf{Heat Paraboloid} & \textbf{Heat Gaussian} \\
\midrule 
\textbf{Input sets} & & & \\
$D_\sigma$ & $[0.1,0.6] \times \{0\}$ &  $([0.1,0.6]^{3\times 3})^4$  & $\{\vec{0}\}\times [0,1]^{10\times 10}$ & $\{\vec{0}\}\times [0,0.1]I_{150}$ \\ 
$D_\mu$ & $\{\vec{0}\}$ & $[0.1,0.6]^{3\times 4} $  & $\{\vec{0}\}$ & $\{\vec{0}\}$ \\
$D_\varphi$ &  $[10,12]  $  & $[10,12]$ & $\{\}$& $\{\}$ \\
$  [v,w] $ & $[9,10]$ & $[9,10] $ & $[0.5,1.5] $ & $[-0.1,0.1]$ \\
$   [0,T] $ & $ [0,1] $ & $ [0,1] $ & $[0,1]$ & $ [0,1]$ \\
\midrule
\textbf{Network} & & & \\
$\text{dim}_{\text{in}}(\Phi)$ & 4 & 53 & 111 & 152 \\ 
architecture & Multilevel & Multilevel & Multilevel& Multilevel \\
$(L,q,\chi)$ &         (4,5,1)  & (4,5,1) & (4,4,1) & (4,4,1)\\
activation $\varrho$ & ReLU & ReLU & ReLU& ReLU \\
$\operatorname{Norm}$ layer & batch norm. & batch norm. & batch norm. & batch norm.\\
\#parameters & 5.4K & 0.8M & 
2.4M & 
4.5M \\
\midrule
\textbf{Training} & & & \\
solution SDE & analytic & Euler-M. & analytic  & analytic\\
optimizer & AdamW & AdamW & AdamW & AdamW \\
param. init. & $\mathcal{U}([-\xi,\xi])$&$\mathcal{U}([-\xi,\xi])$ & $\mathcal{U}([-\xi,\xi])$ & $\mathcal{U}([-\xi,\xi])$\\
weight decay & $0.01$ & $0.01$ & $0.01$ & $0.01$ \\
batch-size &        $2^{16}$        &  $2^{17}$  & $2^{17}$ & $2^{17}$ \\
(init. lr., min. lr.) &          $(10^{-2}, 10^{-8})$   &      $(10^{-3}, 10^{-8})$ & $(10^{-3}, 10^{-8})$ & $(10^{-3}, 10^{-8})$\\
(decay, patience) &          $(0.25, 4000)$   &      $(0.4, 4000)$ & $(0.4, 4000)$ & $(0.4, 4000)$ \\
\midrule
\textbf{Validation} & & & \\
solution PDE & analytic & MC-approx. & analytic & analytic \\
batch-size &        $2^{16}$         &  $2^{17}$ & $2^{17}$ & $2^{17}$ \\
\#eval. batches & 150 & 1 & 150 & 150 \\
\midrule
\textbf{Execution} & & & \\
seeds & 0,1,2,3 & 0,1,2,3 & 0,1,2,3 & 0,1,2,3 \\
\#GPUs per run & 2 (Tesla V100) & 4 (Tesla V100) & 2 (Tesla V100) & 2 (Tesla V100) \\
\bottomrule
\end{tabular}
    \label{table:hp}
\end{table}
\begin{table}[!tb]
\centering
\caption{Ranges for hyperparameter optimization}
\begin{tabular}{llll}
\toprule
  hyperparameter & range \\
\midrule
  $(L,q)$ &       $\{3,4\} \times \{4,5,6\}$  \\
 optimizer & $\{$AdamW, SGD (with momentum \& weight decay)$\}$ \\
 batch-size &        $\{16384, 32768, 65536, 131072 \}$ \\
 learning rate &          $(10^{-1},10^{-5})$  \\
 lr. decay factor  & $(0.2,0.6)$ \\
\bottomrule
\end{tabular}
    \label{table:opt}
\end{table}
\subsection{Additional Numerical Results} \label{sec:add_num}

In Tables~\ref{tab:abl_bs} and~\ref{table:abl_heat} we present an ablation study which empirically proves the superior performance of our Multilevel architecture in combination with batch normalization compared to feed-forward architectures or the usage of layer normalization~\cite{ba2016layer}. For the feed-forward architecture we used the network $\Phi^{2^L}_L$ defined in~\eqref{eq:def_multi}, i.e.\@ only the highest level of the corresponding Multilevel network with $L+1$ levels and $\chi=0$. Despite having slightly less parameters, our Multilevel architecture consistently outperforms the feed-forward architecture. Moreover, the use of residual connections, i.e.\@ $\chi=1$, has a positive impact. Note that all not-mentioned settings are kept as in Table~\ref{table:hp}.

The performance of our algorithm in the case of the Black-Scholes option pricing model from Section~\ref{section_black_scholes} is further illustrated in Figures~\ref{fig:bs2},~\ref{fig:greek2},~\ref{fig:error_bs2}, and~\ref{fig:error_greek2}. Finally, Figure~\ref{fig:heat_dims} depicts the computational cost of our algorithm as a function of the problem input dimension for the heat equation with paraboloid initial condition.

\begin{table}[!tb]
        \centering
        \caption{Ablation study for the Black-Scholes model}
\begin{tabular}{llll}
\toprule
architecture, normalization layer & avg. time (s)   & avg. best $\mathcal{L}^1$-error & \#parameters \\
\midrule
Feed-Forward, layer norm. &       \phantom{0}809 $\pm$ 9 &    0.1476 $\pm$ 0.0772 &        6741 \\
Feed-Forward,  none  &        \phantom{0}496 $\pm$ 26 &    0.0526 $\pm$ 0.0002 &        6101 \\
Feed-Forward, batch norm. &      3755 $\pm$ 57 &    0.0017 $\pm$ 0.0003 &        6741 \\
Multilevel $\chi=0$, layer norm. &       \phantom{0}867 $\pm$ 10 &       0.0349 $\pm$ 0.0000 &        5404 \\
Multilevel $\chi=0$, none &        \phantom{0}570 $\pm$ 6 &    0.0069 $\pm$ 0.0001 &        4804 \\
Multilevel $\chi=0$, batch norm. &      3414 $\pm$ 18 &       0.0012 $\pm$ 0.0000 &        5404 \\
Multilevel $\chi=1$, layer norm. &        \phantom{0}874 $\pm$ 13 &    0.0348 $\pm$ 0.0001 &        5404 \\
Multilevel $\chi=1$, none  &       \phantom{0}581 $\pm$ 10 &       0.0069 $\pm$ 0.0000 &        4804 \\
Multilevel $\chi=1$, batch norm.  &      3453 $\pm$ 34 &    \textbf{0.0011} $\pm$ 0.0001 &        5404 \\
\bottomrule
\end{tabular}
\label{tab:abl_bs}
\end{table}

\begin{table}[!tb]
        \centering
        \caption{Ablation study for the heat equation with paraboloid initial condition}
\begin{tabular}{llll}
\toprule
architecture & avg. time (s)  & avg. best $\mathcal{L}^1$-error & \#parameters \\
\midrule
Feed-Forward &     14764 $\pm$ 65 &     0.0090 $\pm$ 0.0003 &     3020977 \\
Multilevel $\chi=0$ &     13892 $\pm$ 83 &    0.0058 $\pm$ 0.0001 &     2380732 \\
Multilevel $\chi=1$        &      14049 $\pm$ 138 &    \textbf{0.0055} $\pm$ 0.0001 &     2380732 \\
\bottomrule
\end{tabular}
    \label{table:abl_heat}
\end{table}

\begin{figure}[!tb]
    \centering
    \begin{minipage}[b]{.45\textwidth}
        \centering
        \includegraphics[width=\linewidth]{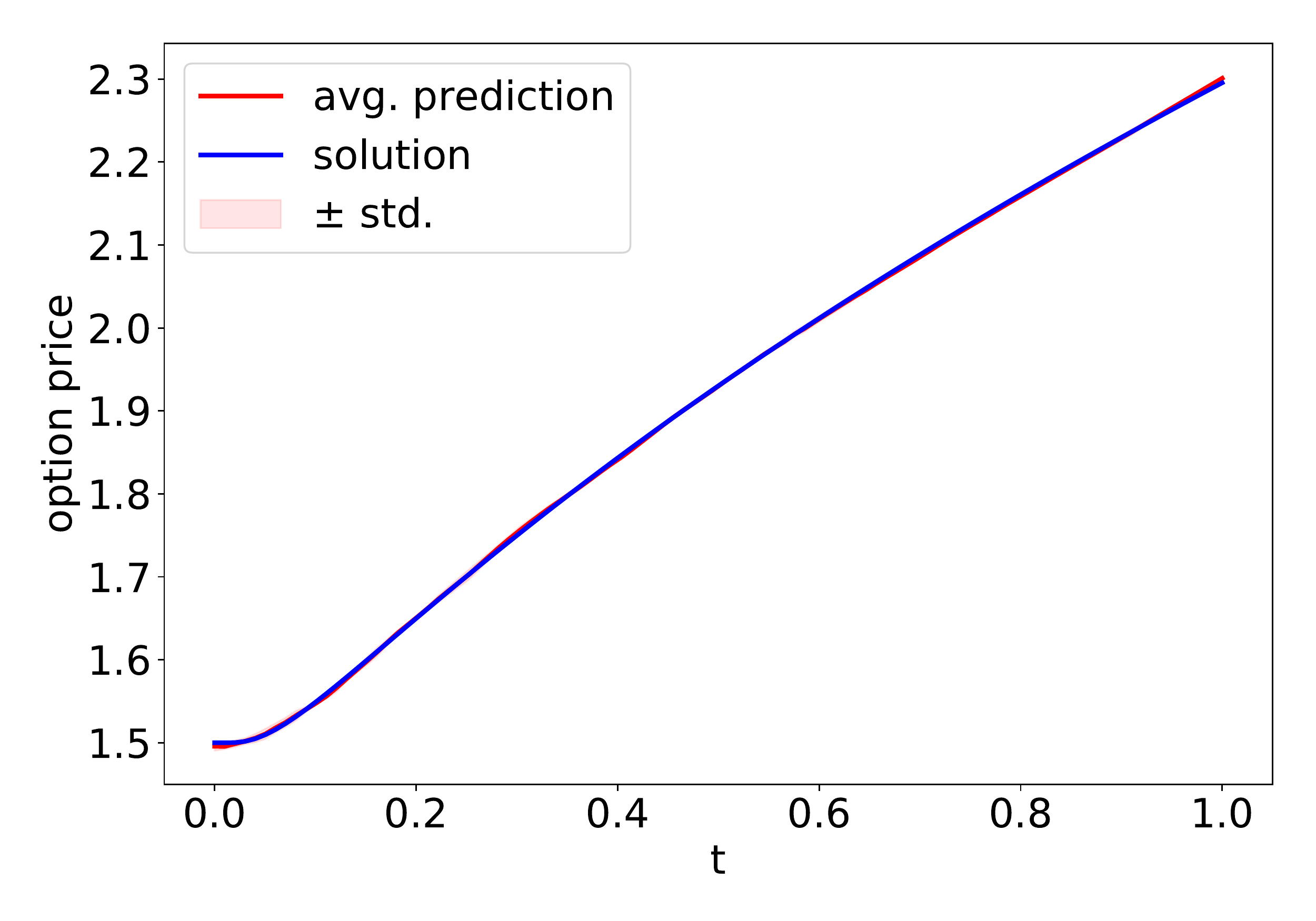}
        \caption{Shows $\bar{u}(\gamma, x, \cdot)$ vs.\@ the average prediction (and its standard deviation) at $x=9.5$, $\gamma_\sigma=0.35$, and $\gamma_\varphi=11$.}
        \label{fig:bs2}
    \end{minipage}
    \hspace{2em}
    \begin{minipage}[b]{0.45\textwidth}
        \centering
        \includegraphics[width=\linewidth]{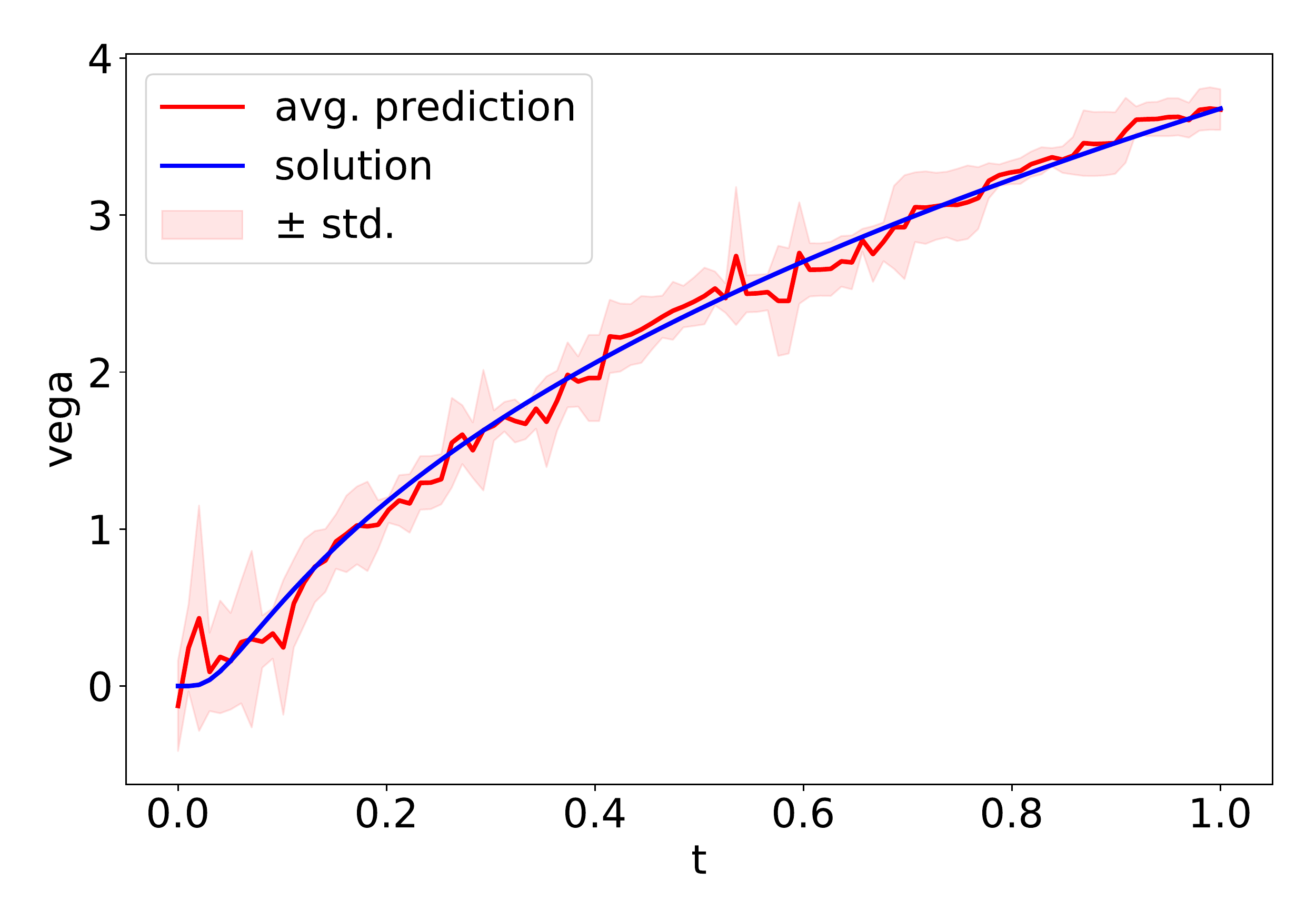}
        \caption{Shows the Vega $\frac{\partial \bar{u}}{\partial \gamma_\sigma}(\gamma, x, \cdot)$ vs.\@ the average prediction (and its standard deviation) at $x=9.5$, $\gamma_\sigma=0.35$, and $\gamma_\varphi=11$.}
        \label{fig:greek2}
    \end{minipage}
\end{figure}
\begin{figure}[!tb]
    \centering
    \begin{minipage}[b]{0.45\textwidth}
        \centering
        \includegraphics[width=\linewidth]{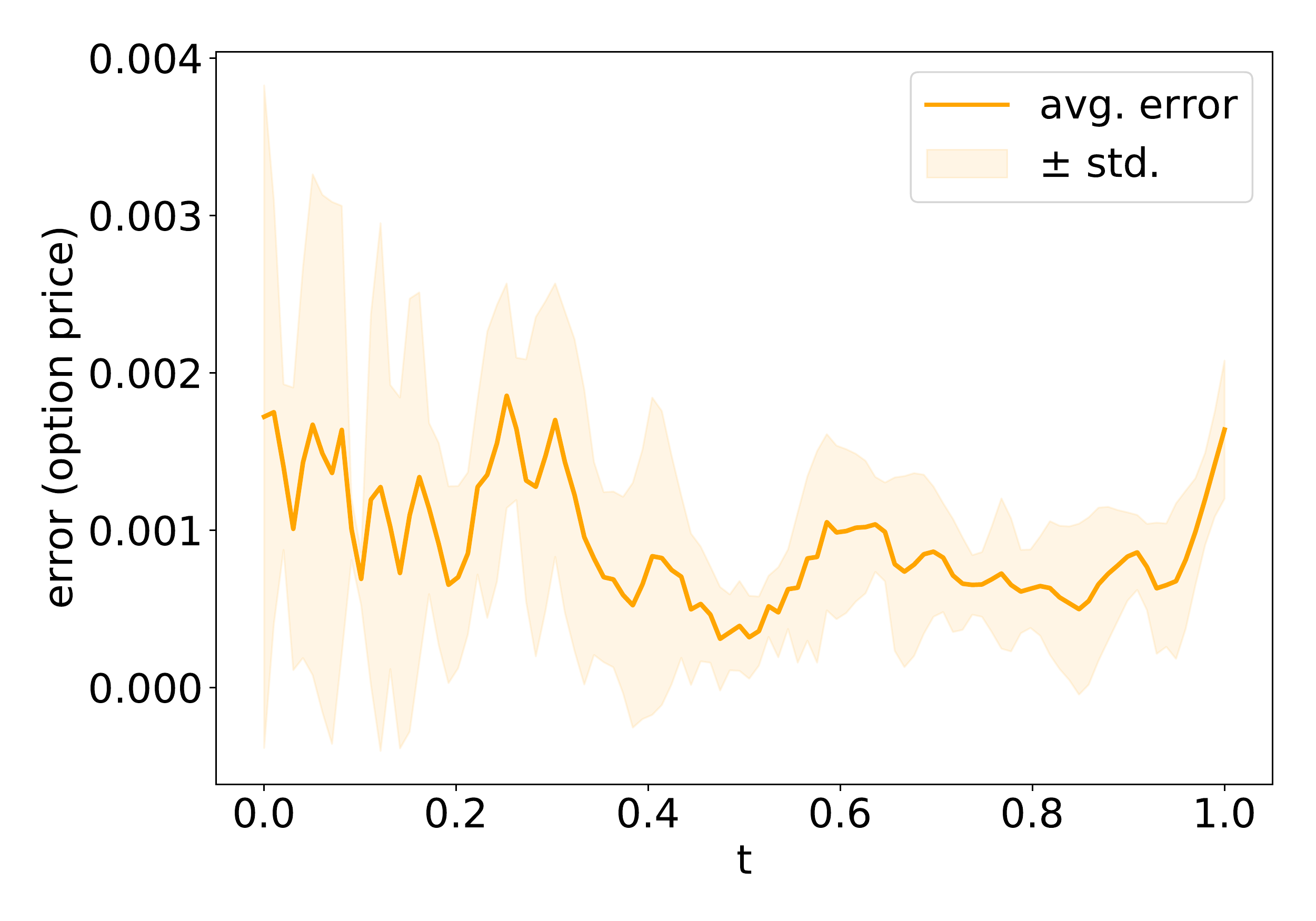}
        \caption{Shows the average prediction error $\tfrac{\vphantom{\frac{\partial \Phi}{\partial \gamma_\sigma}}|\Phi(\gamma, x, {\cdot}) -\bar{u}(\gamma, x, {\cdot})|}{\vphantom{\frac{\partial \Phi}{\partial \gamma_\sigma}}1+|\bar{u}(\gamma, x, {\cdot})|}$ and its standard deviation at $x=9.5$, $\gamma_\sigma=0.35$, and $\gamma_\varphi=11$.}
        \label{fig:error_bs2}
    \end{minipage}
    \hspace{2em}
    \begin{minipage}[b]{0.45\textwidth}
        \centering
        \includegraphics[width=\linewidth]{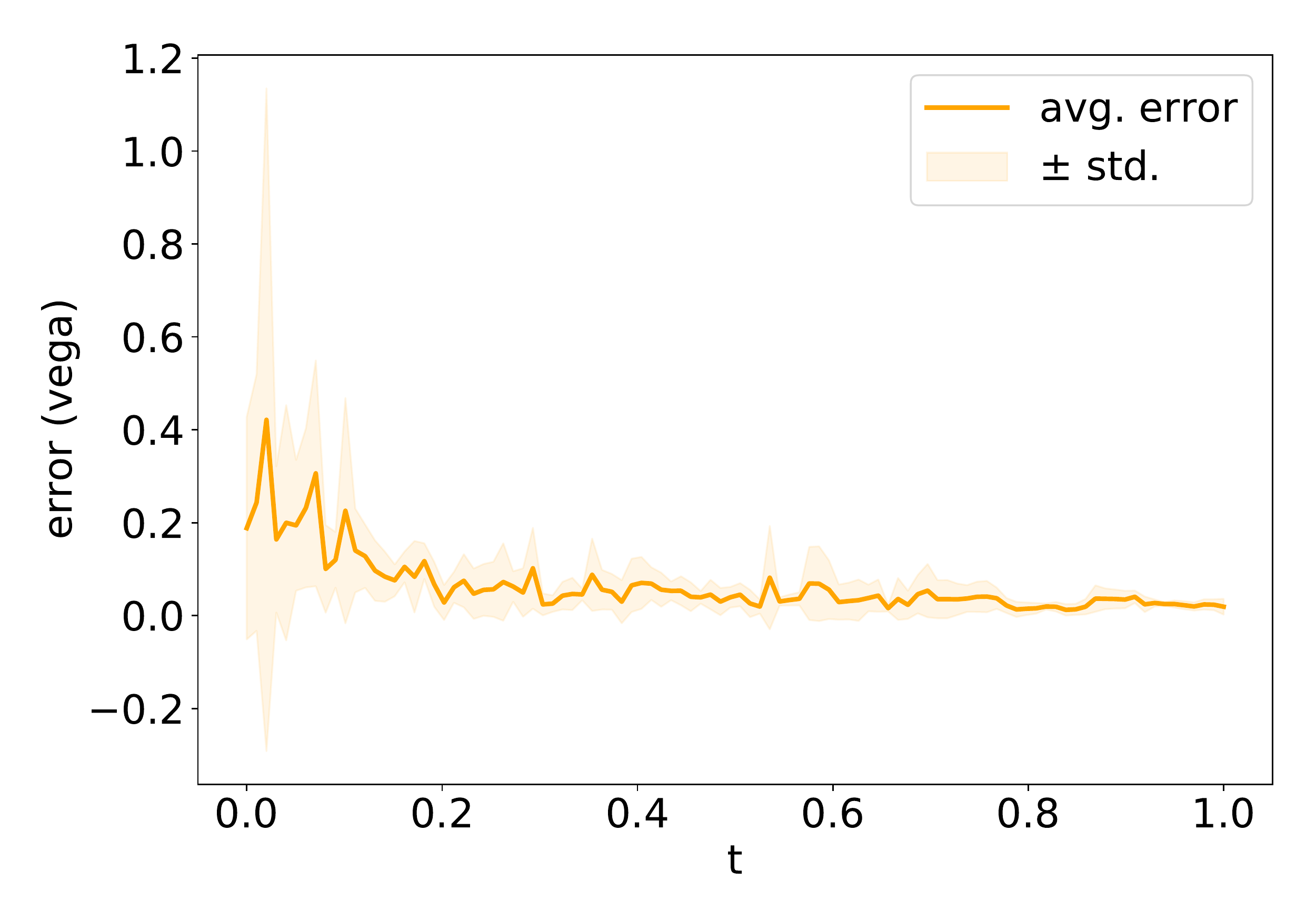}
        \caption{Shows the average error of the Vega $\tfrac{|\frac{\partial \Phi}{\partial \gamma_\sigma}
        (\gamma, x, {\cdot}) -\frac{\partial \bar{u}}{\partial \gamma_\sigma}(\gamma, x, {\cdot})|}{1+|\frac{\partial \bar{u}}{\partial \gamma_\sigma}(\gamma, x, {\cdot})|}$ and its standard deviation at $x=9.5$, $\gamma_\sigma=0.35$, and $\gamma_\varphi=11$.}
        \label{fig:error_greek2}
    \end{minipage}
\end{figure}
\begin{figure}[!tb]
        \centering
        \includegraphics[width=0.90\linewidth]{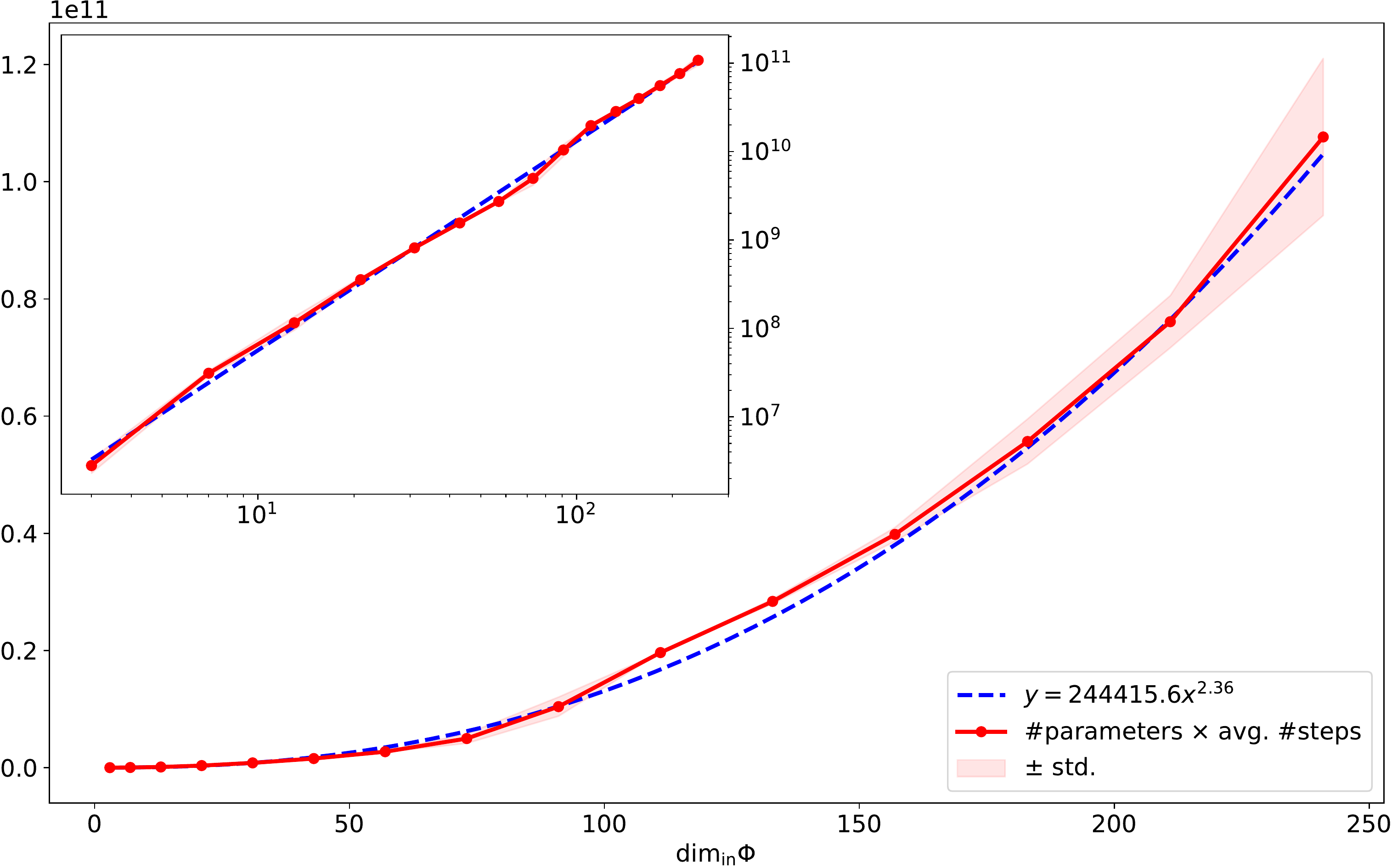}
        \caption{Shows the cost in terms of number of network parameters times average number of steps to achieve an $\mathcal{L}^1$-error of $10^{-2}$ w.r.t.\@ to the problem dimension $d^2+d+1$ for the heat equations with paraboloid initial condition and $d=1,\dots,15$. 
        The absence of the curse of dimensionality is underlined by the linear behaviour in the log-log inset. The error was evaluated every $250$ gradient descent steps and except of the varying dimension all settings are kept as in Table~\ref{table:hp}.}
        \label{fig:heat_dims}
\end{figure}

\end{document}